\documentclass[]{fairmeta}
\usepackage{pifont}

\usepackage{amsmath,amsfonts,bm}

\def\eqref#1{equation~\ref{#1}}

\def\1{\bm{1}}

\DeclareMathAlphabet{\mathsfit}{\encodingdefault}{\sfdefault}{m}{sl}
\SetMathAlphabet{\mathsfit}{bold}{\encodingdefault}{\sfdefault}{bx}{n}

\usepackage{bm}
\usepackage{bbm}
\usepackage{multirow}
\usepackage{pifont}
\usepackage{algorithm}
\usepackage{algorithmic}
\usepackage{wrapfig}
\usepackage{colortbl}
\usepackage{booktabs}

\usepackage{amsmath}
\usepackage{amssymb}
\usepackage{mathtools}
\usepackage{amsthm}
\usepackage[most]{tcolorbox}
\usepackage{graphicx}
\usepackage{hyperref,xcolor}
\usepackage{enumitem}
\usepackage{fontawesome5}
\theoremstyle{plain}
\newtheorem{theorem}{Theorem}[section]
\newtheorem{proposition}[theorem]{Proposition}
\newtheorem{lemma}[theorem]{Lemma}

\theoremstyle{definition}

\newtheorem{assumption}[theorem]{Assumption}
\theoremstyle{remark}

\definecolor{midnightgreen}{rgb}{0.0, 0.29, 0.33}
\definecolor{deepgreen}{HTML}{055c29}
\definecolor{deeppurple}{HTML}{7030a0}
\definecolor{deepblue}{HTML}{171d91}
\definecolor{brown}{HTML}{843c0c}
\definecolor{shadered}{HTML}{ffe5e5}
\definecolor{shadegreen}{HTML}{e5f7ed}
\definecolor{msftBlack}{RGB}{0,0,0}
\definecolor{lightred}{RGB}{255,163,163}
\definecolor{deepred}{RGB}{153,0,0}
\definecolor{softblue}{RGB}{30, 90, 160}

\definecolor{barblue}{RGB}{90,120,180}
\definecolor{barorange}{RGB}{225,124,5}

\definecolor{DeltaBg}{HTML}{D4F2D7} 
\definecolor{SearchBg}{HTML}{C2E6F5} 
\definecolor{AgenticBg}{HTML}{F5C2CC} 
\definecolor{MathBg}{HTML}{E6D4F2} 
\definecolor{ScienceBg}{HTML}{FBE0BC} 

\definecolor{mygreen}{RGB}{229, 245, 224}
\definecolor{mygray}{RGB}{242, 242, 242} 
\definecolor{myred}{RGB}{248, 230, 234}      

\definecolor{failbg}{RGB}{248, 230, 234}      
\definecolor{failframe}{RGB}{176, 36, 24}    
\definecolor{failbadge}{RGB}{225, 151, 168}      

\definecolor{successbg}{RGB}{239, 255, 229}   
\definecolor{successframe}{RGB}{34, 139, 34}  
\definecolor{successbadge}{RGB}{182, 200, 108}

\newcommand{\methodname}{Dr.\ MAS}

\definecolor{ThemePurple}{RGB}{95, 50, 105}
\definecolor{querycolor}{RGB}{112, 112, 112} 
\definecolor{verifycolor}{RGB}{117, 70, 126}
\definecolor{searchcolor}{RGB}{217, 140, 70}
\definecolor{solvercolor}{RGB}{217, 140, 70}

\hypersetup{
  colorlinks=true,
  linkcolor=deepred,
  citecolor=ThemePurple,
  urlcolor=ThemePurple
}

\usepackage{booktabs}
\usepackage{multirow}
\usepackage{makecell}
\usepackage{array}
\usepackage{siunitx}

\definecolor{TableBG}{RGB}{238, 240, 242}
\definecolor{DeltaUp}{RGB}{117, 70, 126}
\definecolor{DeltaDown}{RGB}{217, 140, 70}
\definecolor{DeltaZero}{RGB}{150, 150, 150}

\newcommand{\gup}[1]{\textcolor{DeltaUp}{\uparrow\!#1}}
\newcommand{\gdown}[1]{\textcolor{DeltaDown}{\downarrow\!#1}}
\newcommand{\gzero}[1]{\textcolor{DeltaZero}{\ensuremath{#1}}}

\newcommand{\valdelta}[2]{%
  \ensuremath{\text{#1}_{\smash{\scriptsize #2}}}%
}

\newtcolorbox{promptbox}[1][]{
  colback=gray!5!white,        
  colframe=black!75!black,     
  boxrule=0.3mm,               
  arc=3mm,                     
  auto outer arc=true,         
  width=\linewidth,            
  fontupper=\footnotesize,            
  #1                           
}

\tcbset{
    agentstyle/.style={
        colback=gray!5!white,          
        boxrule=0.3mm,                 
        arc=3mm,                       
        auto outer arc=true,           
        fontupper=\small               
    }
}

\newtcolorbox{verifierbox}[1][]{
    agentstyle,
    colframe=gray!60!verifycolor, 
    title=\textbf{Verifier Agent},
    #1 
}

\newtcolorbox{searchbox}[1][]{
    agentstyle,
    colframe=gray!60!searchcolor, 
    title=\textbf{Search Agent},
    #1
}

\newtcolorbox{answerbox}[1][]{
    agentstyle,
    colframe=teal!75!black,       
    title=\textbf{Answer Agent},
    #1
}

\newtcolorbox{solverbox}[1][]{
    agentstyle,
    colframe=gray!60!solvercolor, 
    title=\textbf{Solver Agent},
    #1
}

\newtcolorbox{envbox}[1][]{
    agentstyle,
    colframe=gray!60!querycolor,
    title=#1, 
}
\usepackage{hyperref}
\usepackage{url}
\usepackage{graphicx}

\usepackage{amsmath}
\usepackage{amssymb}
\usepackage{amsthm}
\usepackage[utf8]{inputenc}
\usepackage[T1]{fontenc}
\usepackage{hyperref}
\usepackage{url}
\usepackage{amsfonts}
\usepackage{nicefrac}
\usepackage{microtype}
\usepackage{xcolor}
\usepackage{graphicx}
\usepackage{framed}
\usepackage{makecell}

\usepackage{caption}
\usepackage{subcaption}
\usepackage{wrapfig}
\usepackage{multirow}
\usepackage{amsfonts}
\usepackage{bm}
\usepackage{verbatim}
\usepackage{longtable}
\usepackage{mathtools, nccmath}
\usepackage{colortbl}
\usepackage{hyperref}

\title{\methodname{}: Stable Reinforcement Learning for Multi-Agent LLM Systems}

\author[1]{Lang Feng}
\author[1]{Longtao Zheng}
\author[1]{Shuo He}
\author[1]{Fuxiang Zhang}
\author[1]{Bo An}
\affiliation[1]{Nanyang Technological University, Singapore}

\abstract{
Multi-agent LLM systems enable advanced reasoning and tool use via role specialization, yet reliable reinforcement learning (RL) post-training for such systems remains difficult. In this work, we theoretically pinpoint a key reason for training instability when extending group-based RL to multi-agent LLM systems. We show that under GRPO-style optimization, a global normalization baseline may deviate from diverse agents' reward distributions, which ultimately leads to gradient-norm instability. Based on this finding, we propose \methodname{}, a simple and stable RL training recipe for multi-agent LLM systems. \methodname{} uses an agent-wise remedy: normalizing advantages per agent using each agent’s own reward statistics, which calibrates gradient scales and dramatically stabilizes training, both theoretically and empirically. Beyond the algorithm, \methodname{} provides an end-to-end RL training framework for multi-agent LLM systems, supporting scalable orchestration, flexible per-agent LLM serving and optimization configs, and shared resource scheduling of LLM actor backends. We evaluate \methodname{} on multi-agent math reasoning and multi-turn search benchmarks using Qwen2.5 and Qwen3 series models. \methodname{} achieves clear gains over vanilla GRPO (e.g., +5.6\% avg@16 and +4.6\% pass@16 on math, and +15.2\% avg@16 and +13.1\% pass@16 on search) while largely eliminating gradient spikes. Moreover, it remains highly effective under heterogeneous agent-model assignments while improving efficiency.}

\date{February 9, 2026}
\correspondence{Bo An at \email{boan@ntu.edu.sg}}
\metadata[Author emails:]{\email{\{lang005, longtao001\}@e.ntu.edu.sg}}
\metadata[\faGithub~~Code:]{\url{https://github.com/langfengQ/DrMAS}}
\metadata[\ ]

\begin{document}
\maketitle

\newcommand{\fix}{\marginpar{FIX}}
\newcommand{\new}{\marginpar{NEW}}

\section{Introduction}
Large Language Models (LLMs)~\citep{achiam2023gpt,team2023gemini,liu2024deepseek,yang2025qwen3} have demonstrated impressive reasoning abilities across diverse domains (e.g., question answering, code generation), but many practical applications involve complex scenarios where multiple agents need to interact and coordinate. For example, tasks like complex information retrieval~\citep{chang2025main,zhang2025agentorchestra}, agentic software engineering~\citep{hong2024metagpt,qian2024chatdev}, and open-ended device control~\citep{wang2024mobile,tan2024cradle} involve multiple participants working together over extended horizons. Organizing LLMs into a Multi-Agent System (MAS), where each agent specializes in a subtask or role, has become a trend to handle complex real-world tasks more effectively~\citep{tran2025multi,zhang2025landscape}.

Reinforcement Learning (RL)~\citep{sutton2018reinforcement} now plays a foundational role in LLM post-training. Despite its growing importance, the extension of RL training to multi-agent LLM systems remains largely underexplored from both \textit{algorithmic} and \textit{system} perspectives.
On the algorithmic side, while group-based RL methods like Group Relative Policy Optimization (GRPO)~\citep{shao2024deepseekmath} excel in single-agent scenarios, adapting them to multi-agent settings introduces significant challenges due to the frequent \textit{instability} observed across various scenarios~\citep{chen2025heterogeneous,zhao2025stronger,yuan2025marshal}. Agents are often invoked at different frequencies, leading to heterogeneous data distributions that greatly complicate end-to-end optimization~\citep{hong2025multi}.
On the system side, recent large-scale RL post-training frameworks, like veRL~\citep{sheng2024hybridflow}, ROLL~\citep{wang2025reinforcement}, and AReaL~\citep{fu2025areal}, provide flexible, high-throughput training pipelines for LLMs, but are largely designed around optimizing a single LLM actor. They generally lack the native support for efficient multi-agent orchestration and multiple LLMs' co-training, restricting the flexible scheduling and resource sharing required for heterogeneous agent configurations.

In this work, we theoretically identify that applying vanilla GRPO to train multi-agent LLM systems introduces systematic gradient variance and destabilizes training. We provide rigorous mathematical and empirical analysis, demonstrating that using a global advantage baseline across agents can inflate the second moment of their gradients, leading to gradient-norm explosion. 
Building on this analysis, we propose \methodname{}, a simple and stable RL training recipe for multi-agent LLM systems. \methodname{} adopts a straightforward yet effective remedy: each agent normalizes rewards using its own mean and variance. Concretely, we group action experience by agent so that each policy’s advantage estimates are normalized with respect to its own data distribution.
This calibration balances per-agent gradients, thus resulting in a dramatic reduction in variance for the policy gradient estimator. Beyond the algorithm itself, \methodname{} also provides an end-to-end RL training framework tailored for multi-agent LLM systems. It supports scalable multi-agent orchestration, flexible agent-model assignment with optional LLM sharing (e.g., co-training 7B and 3B models), per-agent configuration of optimization, and shared resource pooling for efficient scheduling of LLM actor backends. The result is a unified system that maintains well-conditioned gradients and high hardware efficiency while enabling stable co-training across multiple LLM agents.

We evaluate \methodname{} on role-specialized multi-agent systems for math reasoning and multi-turn search, using Qwen2.5~\citep{bai2025qwen2} and Qwen3~\citep{yang2025qwen3} series models, under both LLM-sharing and non-sharing settings. Across tasks and settings, \methodname{} consistently improves the performance over vanilla GRPO (e.g., +5.6\% avg@16 and +4.6\% pass@16 on math, and +15.2\% avg@16 and +13.1\% pass@16 on search). We also observe markedly improved stability, with gradient-norm spikes largely eliminated. Furthermore, \methodname{} remains highly effective under heterogeneous agent-model assignments, enabling smaller models for lower-level agents' decisions while improving overall system efficiency. 

\section{Related Work}
\textbf{Reinforcement Learning for LLMs.}~~
Beyond early alignment-focused approaches such as RLHF~\citep{ziegler2019fine,stiennon2020learning,ouyang2022training,rafailov2024direct}, recent work studies Reinforcement Learning from Verifiable Rewards (RLVR), which leverages automatically checkable signals (e.g., correctness in math or code) to improve LLM capabilities~\citep{zeng2025simplerl}.
Within this setting, group-based RL has emerged as a strong alternative to classical actor-critic algorithms like PPO~\citep{schulman2017proximal}. Techniques such as GRPO~\citep{shao2024deepseekmath}, RLOO~\citep{kool2019buy,ahmadian2024back},
Dr.~GRPO~\citep{liu2025understanding}, DAPO~\citep{yu2025dapo}, and GSPO~\citep{zheng2025group} aggregate multiple rollouts for the same query and perform relative comparisons within the group, thereby avoiding explicit value-function learning.
RLVR has also been extended to agentic, multi-turn settings where LLMs act as automatic agents~\citep{zhou2024archer,bai2024digirl,feng2025towards,wang2025spa,zhang2025agent,feng2026agentocr}. GRPO-style training has been widely applied to tool use~\citep{qian2025toolrl,xue2025simpletir}, OS control~\citep{lai2025computerrl}, and gaming~\citep{wang2025ragen}.
Further variants refine group construction or objectives, such as GiGPO~\citep{feng2025group} and ARPO~\citep{dong2025agentic}.

\textbf{Reinforcement Learning for Multi-Agent LLMs.}~~
Recent RL post-training has expanded from single-agent scenarios to \emph{learning coordination} in role-specialized, multi-turn multi-agent systems. Self-play training (e.g., SPIRAL~\citep{liu2025spiral}, and MARSHAL~\citep{yuan2025marshal} use multi-turn dynamics to generate curricula and rewards with minimal manual labeling. However, these approaches are often confined to dyadic (two-agent) self-play scenarios. To ease deployment, Chain-of-Agents~\citep{li2025chain} distills multi-agent trajectories into a single agentic policy. Meanwhile, group-relative optimization has been extended to multi-agent settings~\citep{liu2025llm,chen2025heterogeneous,li2025flow,park2025maporl,wan2025rema,xue2025comas,mo2025multi,hong2025multi}, but these methods typically rely on heuristics and lack stability guarantees. \methodname{} distinguishes itself by theoretically identifying \emph{gradient-norm inflation} as the root cause of instability and proposing a simple yet rigorous agent-wise solution.

\textbf{Reinforcement Learning Infrastructure.}~~
As RL post-training scales, infrastructure has shifted toward optimizing the end-to-end \emph{rollout-train loop}, where throughput, scheduling, and variable-length generation dominate system efficiency. General-purpose stacks such as veRL~\citep{sheng2024hybridflow}, OpenRLHF~\citep{hu2024openrlhf}, ROLL~\citep{wang2025reinforcement}, slime~\citep{slime_github}, and AReaL~\citep{fu2025areal} increasingly provide modular pipeline abstractions and distributed execution to improve utilization under heavy sampling. As agentic use cases grow, frameworks increasingly emphasize multi-turn rollout and tool integration (e.g., verl-agent~\citep{feng2025group}, VerlTool~\citep{jiang2025verltool}, Agent-Lightning~\citep{luo2025agent}), with MARTI~\citep{marti2025} and PettingLLMs~\citep{zhao2025stronger} offering a practical multi-agent training interface. However, they either provide limited support for heterogeneous model assignments or lack a shared resource pool for efficient scheduling. Our \methodname{} addresses both to improve scalability and utilization in MAS post-training.

\section{Preliminaries}
\label{sec:background}
\subsection{Multi-Agent LLMs}
We consider a cooperative multi-agent LLM system consisting of $K$ distinct LLM agents ${\pi_{\theta_1}, \pi_{\theta_2}, \dots, \pi_{\theta_K}}$, each parameterized by its own LLM weights $\theta_k$. The agents jointly engage in solving complex tasks sampled from a distribution $x\in p(X)$. Each full interaction process (trajectory) produces a single outcome reward $R \in \mathbb{R}$ to indicate success or failure. During task completion, the agents’ joint interaction unfolds as a trajectory $\bm{\tau} = \{(\bm{s}_1, \bm{a}_1, k_1), (\bm{s}_2, \bm{a}_2, k_2), \dots, (\bm{s}_T, \bm{a}_T, k_T)\}$, where $\bm{s}_t$ denotes the conversational or contextual state (e.g., dialogue history, task prompt, or shared memory) at execution step $t$, $\bm{a}_t$ is the text output produced, and $k_t \in {1,\dots,K}$ denotes which LLM agent was active at step $t$. The active agent can change dynamically across the trajectory, for instance, in a hierarchical multi-agent framework, a high-level planner LLM may decide which sub-agent executes at each step. Hence, the index $k_t$ explicitly denotes the identity of the agent executing at each step. The execution LLM agent $k_t$ generates its output based on its policy $a_t \sim \pi_{\theta_{k_t}}(\cdot \mid s_t)$. Depending on the system design, the agents may share parameters (i.e., $\theta_1 = \dots = \theta_K$) , differing in role-specific prompts, enabling efficient adaptation under a unified LLM, or they may maintain distinct parameters ($\theta_i \neq \theta_j$) to specialize in heterogeneous sub-tasks, allowing diverse reasoning capabilities across agents.

\subsection{Group Relative Policy Optimization}
Group-based RL methods like Group Relative Policy Optimization (GRPO)~\citep{shao2024deepseekmath} optimize policies by comparing multiple rollouts generated from the same task instruction and normalizing their rewards within each group, thereby avoiding explicit value-function estimation. 
Formally, given a task instruction $x$, the multi-agent LLM system samples a set of $N$ trajectories
\begin{equation}
    \{\tau^i = (s_1^i, a_1^i, k_1^i, \dots, s_{T_i}^i, a_{T_i}^i, k_{T_i}^i)\}_{i=1}^N,
\end{equation}
where $k_t^i \in \{1,\dots,K\}$ denotes the active agent at step $t$ of trajectory $i$.  
Each trajectory $\tau^i$ receives a scalar terminal reward $R^i = R(\tau^i) \in \mathbb{R}$ that measures the overall quality of the generated outcome. 
The normalized advantage for each trajectory is computed using the group’s mean and standard deviation:
\begin{equation}
    A^i_{\text{global}} = \frac{R^i - \mu}{\sigma}, 
    \quad 
    \mu = \frac{1}{N}\sum\nolimits_{i=1}^N R^i, 
    \quad 
    \sigma^2 = \frac{1}{N}\sum\nolimits_{i=1}^N (R^i - \mu)^2.
\end{equation}
This advantage is then propagated to all agent outputs that contributed to the trajectory. Formally, we define the set of outputs of agent $k$ as $\mathcal{Y}_k = \{a_t^i \mid k_t^i = k\}$, i.e., the collection of all time steps $(i,t)$ across the group at which agent $k$ produces an action. Notably, agents are often invoked at different frequencies, which results in varying sample sizes $|\mathcal{Y}_k|$.
The RL objective for agent $k$ is then given by
\begin{equation}
    \mathcal{J}_k(\theta_k)
    = \mathbb{E}_{x\sim p(x)}\left[
      \frac{1}{|\mathcal{Y}_k|}
      \sum_{\bm{a}_t^i\in \mathcal{Y}_k}
      \min\Bigl(
        \rho_{\theta_k}(\bm{a}_t^{i}) A^i_{\text{global}},
        \,
        \text{clip}\bigl(\rho_{\theta_k}(\bm{a}_t^{i}), 1 \pm \epsilon\bigr) A^i_{\text{global}}
      \Bigr)
    \right],
\end{equation}
where $\rho_{\theta_k}(\bm{a}_t^{i}) = \frac{\pi_{\theta_k}(\bm{a}_t^{i} \mid \bm{s}_t^{i})}{\pi_{{\theta_k}^{\text{old}}}(\bm{a}_t^i \mid \bm{s}_t^i)}$ is the importance sampling ratio. Here, we omit the KL-divergence regularization for notational brevity.
\section{Methodology}
\begin{wrapfigure}{r}{0.48\textwidth}
   \begin{center}
   \includegraphics[width=1\linewidth]{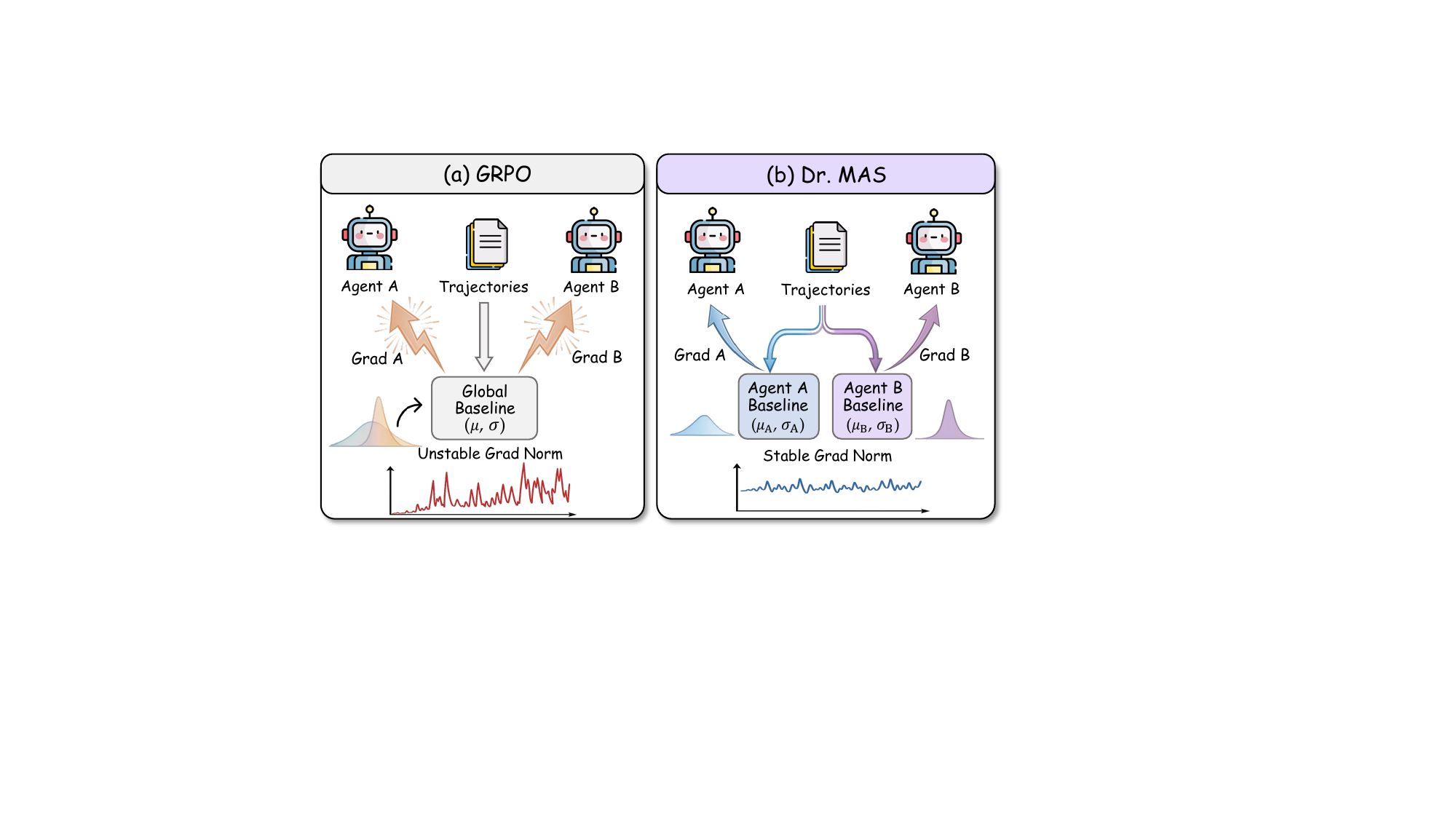}
   \end{center}
   \vspace{-0.1in}
   \caption{Algorithm comparison. (a) GRPO with global baseline $(\mu,\sigma)$ can cause unstable gradient norm. (b) \methodname{} with per-agent normalization $(\mu_k,\sigma_k)$ stabilizes the training of MAS.}
   \label{fig:method_comparison}
   \vspace{-0.1in}
\end{wrapfigure}
In a multi-agent LLM system, different agents often specialize in
distinct functions (e.g., information retrieval vs. answer synthesis, high-level planning vs. low-level execution), and consequently can exhibit substantially different reward distributions. We find that using vanilla GRPO with the global baseline $(\mu,\sigma)$ for all agents can be suboptimal: some agents may consistently operate in reward distributions above the global mean, while others remain below it. This persistent bias in how advantages are normalized can introduce a deterministic shift in the effective advantages seen by each agent, which in turn can inflate gradient-estimator variance and destabilize training.

In this section, we introduce \methodname{} by (\textbf{1}) theoretically formalizing the instability and analyzing the second moment of the per-agent gradient under GRPO optimization (Section~\ref{sec:risk}); (\textbf{2}) proposing an agent-wise remedy that calibrates each agent's advantage using its own reward statistics, thereby improve the training stability (Section~\ref{sec:remedy}); and (\textbf{3}) describing a system framework that implements efficient end-to-end RL training recipe for multi-agent LLM systems (Section~\ref{sec:framework}). The complete pseudo code of \methodname{} is provided in Appendix~\ref{app:pseudocode}.

\subsection{Risk of Gradient Norm Explosion}
\label{sec:risk}
To focus on how advantage normalization causes the instability in MAS training, we perform a theoretical analysis of the gradient norm. We base our analysis on the unclipped GRPO gradient (clipping and other regularization only further bound the update and are orthogonal to the gradient issue we study).
For each agent $k$, and for each step $(i,t)$ such that $k_t^i = k$, we define
the \emph{score function} as $z_{i,t}^{(k)} \triangleq \nabla_{\theta_k} \rho_{\theta_k}(\bm{a}_t^{i})$ and corresponding (unclipped) GRPO gradient contribution as
\begin{equation}
    \tilde{g}_k^{\mathrm{global}}
    \triangleq
    \frac{R^i - \mu}{\sigma}\,
    z_{i,t}^{(k)}.
\end{equation}
Here, $(\mu,\sigma)$ are the mean and standard deviation used by vanilla GRPO. We assume that each agent’s score function has a bounded second moment:
\begin{assumption}
\label{assump:score}
\emph{For each agent $k$, there exists a constant $C_k < \infty$ such that $\mathbb{E}\big[\,\|z_{i,t}^{(k)}\|^2\,\big] \le C_k$.}
\end{assumption}
Then we can express the second moment of the per-agent gradient as follows.
\begin{lemma}
\label{lem:global-second-moment}
Under Assumptions~\ref{assump:score}, for any agent $k$,
\begin{equation}
\nonumber
\mathbb{E}_{\bm{a}_t^i\sim \mathcal{Y}_k}\big[\|\tilde{g}_k^{\mathrm{global}}\|^2\big]
{=} \mathbb{E}_{\bm{a}_t^i\sim \mathcal{Y}_k}\big[\|z_{i,t}^{(k)}\|^2\big]\,
  \frac{\sigma_k^2 + (\mu_k - \mu)^2}{\sigma^2} + \Delta_k,
\end{equation}
where $\mu_k \triangleq \frac{1}{|\mathcal{Y}_k|}\sum_{\bm{a}_t^i\in \mathcal{Y}_k} R^i$, $\sigma_k^2 \triangleq \frac{1}{|\mathcal{Y}_k|}\sum_{\bm{a}_t^i\in \mathcal{Y}_k} (R^i - \mu_k)^2$
are the mean and variance when sampling
time steps uniformly from $\mathcal{Y}_k$ (i.e., when agent $k$ is active). $\Delta_k$ is a score-reward covariance correction term.
\end{lemma}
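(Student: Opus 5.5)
The plan is to expand the squared norm directly and then split the product of the score norm and the squared advantage into a "decoupled" part plus a covariance remainder, which will serve as $\Delta_k$. Throughout, I treat $(\mu,\sigma)$ and $(\mu_k,\sigma_k)$ as fixed group statistics (conditional on the sampled group). Expectations $\mathbb{E}_{\bm{a}_t^i\sim\mathcal{Y}_k}$ are with respect to drawing a time step uniformly from $\mathcal{Y}_k$.

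\textbf{Step 1.} Since $\tilde g_k^{\mathrm{global}}$ is a scalar multiple of $z_{i,t}^{(k)}$, we have
\begin{equation}
\nonumber
\|\tilde g_k^{\mathrm{global}}\|^2 = \frac{(R^i-\mu)^2}{\sigma^2}\,\|z_{i,t}^{(k)}\|^2 .
\end{equation}
Set $X \triangleq \|z_{i,t}^{(k)}\|^2$ and $Y \triangleq (R^i-\mu)^2/\sigma^2$, both viewed as random variables under uniform sampling from $\mathcal{Y}_k$. The identity $\mathbb{E}[XY]=\mathbb{E}[X]\,\mathbb{E}[Y]+\mathrm{Cov}(X,Y)$ then lets me define
\begin{equation}
\nonumber
\Delta_k \triangleq \mathrm{Cov}_{\bm{a}_t^i\sim\mathcal{Y}_k}\!\Big(\|z_{i,t}^{(k)}\|^2,\ \tfrac{(R^i-\mu)^2}{\sigma^2}\Big).
\end{equation}
This is exactly the score-reward covariance correction named in the statement. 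Assumption~\ref{assump:score} guarantees $\mathbb{E}[X]\le C_k<\infty$, so the decoupled term is finite. Rewards are bounded in practice, and $Y$ is a finite group quantity for $\sigma>0$, so the covariance is well defined.

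\textbf{Step 2.} Compute $\mathbb{E}[Y]$ with a bias–variance decomposition around the agent-specific mean. Write $R^i-\mu=(R^i-\mu_k)+(\mu_k-\mu)$ and square:
\begin{equation}
\nonumber
\mathbb{E}_{\mathcal{Y}_k}\big[(R^i-\mu)^2\big]=\mathbb{E}_{\mathcal{Y}_k}\big[(R^i-\mu_k)^2\big]+2(\mu_k-\mu)\,\mathbb{E}_{\mathcal{Y}_k}\big[R^i-\mu_k\big]+(\mu_k-\mu)^2 .
\end{equation}
By the definition of $\mu_k$ as the uniform average over $\mathcal{Y}_k$, the cross term vanishes, and the first term equals $\sigma_k^2$ by definition. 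Dividing by $\sigma^2$ gives $\mathbb{E}[Y]=\big(\sigma_k^2+(\mu_k-\mu)^2\big)/\sigma^2$. Substituting this into Step 1 yields the claimed identity.

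\textbf{Main obstacle.} The algebra is routine. The point needing care is conceptual: the expectation must be taken with respect to the \emph{same} uniform distribution over $\mathcal{Y}_k$ that defines $\mu_k$ and $\sigma_k^2$. Only then does the cross term cancel exactly, with a count-weighted mean in which trajectories contributing several steps of agent $k$ are weighted accordingly. I would state explicitly that $(\mu,\sigma)$ are conditioned on and treated as constants, rather than as random variables correlated with the sampled step. The dependence between scores and rewards is deliberately not assumed away; it is absorbed into $\Delta_k$.
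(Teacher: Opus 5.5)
Your proposal is correct and follows the paper's proof essentially step for step. The paper also writes the second moment as $\mathbb{E}[X]\,\mathbb{E}[Y]+\mathrm{Cov}(X,Y)$ with $\Delta_k=\mathrm{Cov}\bigl(\|z_{i,t}^{(k)}\|^2,(R^i-\mu)^2/\sigma^2\bigr)$, and it evaluates $\mathbb{E}_{\mathcal{Y}_k}[(R^i-\mu)^2]=\sigma_k^2+(\mu_k-\mu)^2$ by expanding around $\mu_k$ so that the cross term vanishes. Your remark that the expectation must use the same uniform distribution over $\mathcal{Y}_k$ that defines $\mu_k$ and $\sigma_k^2$, with $(\mu,\sigma)$ held fixed, states explicitly what the paper's argument implicitly relies on.
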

See Appendix~\ref{app:proof_lemma_global_second_moment} for the proof.
Lemma~\ref{lem:global-second-moment} separates the per-agent gradient norm into a dominant scaling factor and a residual covariance correction. The multiplicative factor $\bigl(\sigma_k^2+(\mu_k-\mu)^2\bigr)/\sigma^2$ grows when agent $k$ operates in a reward distribution whose mean is far from the global mean or agent $k$’s conditional reward variance is much larger than the global variance.
The term $\Delta_k$ captures the residual score-reward correlation. In large-scale LLM training, rewards are typically low-dimensional signals of final task quality (e.g., pass/fail for reasoning, correctness for coding), while $z_{i,t}^{(k)}$ depends mainly on the local token-level stochasticity of the policy. Empirically, their covariance is often much smaller than the dominant scaling factor $\mathbb{E}_k[\|z_{i,t}^{(k)}\|^2](\sigma_k^2 + (\mu_k - \mu)^2)/\sigma^2$. This decomposition reveals the intrinsic instability of global normalization of GRPO in heterogeneous multi-agent training: a large deviation in the dominant scaling factor can inflate the gradient and lead to unstable updates. We formalize this phenomenon below.
\begin{proposition}[Gradient-Norm Inflation]
\label{prop:gradient-inflation}
As either the normalized mean deviation
$|\mu_k - \mu| / \sigma$ or the normalized variance ratio
$\sigma_k^2 / \sigma^2$ becomes large, the second moment of
$\tilde{g}_k^{\mathrm{global}}$ grows at least linearly. Consequently, along any training process for which there exists a sequence of iterations indexed by $m$ such that
$$
\frac{\sigma_{k,m}^2 + (\mu_{k,m} - \mu_m)^2}{\sigma_m^2} \to \infty, \quad  \mathbb{E}\bigl[\|\tilde{g}_m^{\mathrm{global}}\|^2\bigr] \to \infty,
$$
where $\tilde{g}_{m}^{\mathrm{global}} = (\tilde{g}_{1,m}^{\mathrm{global}},\dots,\tilde{g}_{K,m}^{\mathrm{global}})$ stacking all LLM agents' gradients.
\end{proposition}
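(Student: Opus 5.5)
The argument rests on the decomposition in Lemma~\ref{lem:global-second-moment}. Write $S_k \triangleq \mathbb{E}_{\bm{a}_t^i\sim\mathcal{Y}_k}[\|z_{i,t}^{(k)}\|^2]$ and $F_k \triangleq (\sigma_k^2+(\mu_k-\mu)^2)/\sigma^2$. The lemma gives
\begin{equation}
\nonumber
\mathbb{E}_{\bm{a}_t^i\sim\mathcal{Y}_k}\bigl[\|\tilde{g}_k^{\mathrm{global}}\|^2\bigr] = S_k F_k + \Delta_k .
\end{equation}
The first claim is that this quantity grows at least linearly in $F_k$, and hence in each of $|\mu_k-\mu|^2/\sigma^2$ and $\sigma_k^2/\sigma^2$. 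Both normalized quantities enter $F_k$ additively and nonnegatively, so $F_k \ge \max\{(\mu_k-\mu)^2/\sigma^2,\ \sigma_k^2/\sigma^2\}$.

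To get a genuine lower bound, the statement needs two non-degeneracy conditions, which I would make explicit and justify as the regime described in the discussion after the lemma.
\begin{enumerate}
\item The score second moment is bounded below along the sequence: $S_k \ge c_k > 0$. Assumption~\ref{assump:score} gives only the upper bound $S_k \le C_k$, so this is an added assumption.
\item The correction is dominated by the main term: $|\Delta_k| \le \eta\, S_k F_k$ for some $\eta<1$. Alternatively, $\Delta_k$ is bounded by a quantity that grows sublinearly in $F_k$.
\end{enumerate}
Under these conditions,
\begin{equation}
\nonumber
\mathbb{E}\bigl[\|\tilde{g}_k^{\mathrm{global}}\|^2\bigr] \ge (1-\eta)\, c_k\, F_k ,
\end{equation}
which is the linear growth claim.

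For the limit statement, index by iteration $m$ and suppose $F_{k,m}\to\infty$ for some agent $k$. The bound above gives $\mathbb{E}[\|\tilde{g}_{k,m}^{\mathrm{global}}\|^2]\to\infty$. The stacked vector satisfies $\|\tilde{g}_m^{\mathrm{global}}\|^2=\sum_{j}\|\tilde{g}_{j,m}^{\mathrm{global}}\|^2$, where the agents' parameter blocks are disjoint. When parameters are shared, I would instead treat the stacking as a concatenation of per-agent contributions. Since every summand is nonnegative, $\mathbb{E}[\|\tilde{g}_m^{\mathrm{global}}\|^2] \ge \mathbb{E}[\|\tilde{g}_{k,m}^{\mathrm{global}}\|^2]\to\infty$. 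The sampling measure for agent $k$ (uniform over $\mathcal{Y}_k$) and the overall expectation differ only by the positive mixture weight of agent $k$'s steps. I would assume this weight, roughly $|\mathcal{Y}_k|/\sum_j|\mathcal{Y}_j|$, stays bounded away from zero, so divergence transfers to the full expectation.

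The main obstacle is controlling $\Delta_k$. It is a score--reward covariance term, essentially $\mathbb{E}_k[(\|z\|^2 - S_k)((R-\mu)^2/\sigma^2 - F_k)]$. A priori it can be negative and of the same order as $S_k F_k$. By Cauchy--Schwarz it is bounded by the standard deviation of $\|z\|^2$ times that of $(R-\mu)^2/\sigma^2$. This does not automatically grow more slowly than $F_k$, unless the reward is bounded (e.g.\ $R\in\{0,1\}$) and the fourth moment of the score is controlled.

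I would first try the bounded-reward case. There $(R-\mu)^2/\sigma^2 \le (R_{\max}-R_{\min})^2/\sigma^2$, and the whole blow-up comes from $\sigma\to 0$ together with a mean shift. In this case $\Delta_k$ scales like $F_k$ times a correlation coefficient, so the domination condition reduces to assuming the score--reward correlation is below one.

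Failing that, I would state the result conditionally on $|\Delta_k| = o(F_k)$. This matches the empirical justification given after Lemma~\ref{lem:global-second-moment}.
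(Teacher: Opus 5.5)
Your argument is the paper's argument. You apply the Lemma's decomposition $S_kF_k+\Delta_k$ at each iteration, let the multiplier $F_k$ drive the growth, and pass to the stacked gradient via $\|\tilde{g}_m^{\mathrm{global}}\|^2=\sum_j\|\tilde{g}_{j,m}^{\mathrm{global}}\|^2$ with nonnegative summands; your hypotheses $S_k\ge c_k>0$ and $|\Delta_k|\le\eta S_kF_k$ (or $|\Delta_k|=o(F_k)$) just make explicit the paper's caveat that divergence holds ``unless'' the score second moment or $\Delta_{k,m}$ cancels the growth. One small correction to a side remark the proof does not rely on: since $|\Delta_k|$ equals the absolute correlation times $\mathrm{sd}(\|z_{i,t}^{(k)}\|^2)\,\mathrm{sd}\bigl((R^i-\mu)^2/\sigma^2\bigr)$, the condition $|\Delta_k|\le\eta S_kF_k$ needs the absolute correlation times the product of the two coefficients of variation to be at most $\eta$, which is stronger than a correlation below one.
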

The proof is provided in Appendix~\ref{app:proof_prop_gradient_inflation}. Proposition~\ref{prop:gradient-inflation} demonstrates that gradient-norm inflation can be triggered by \emph{any} agent whose reward distribution is poorly aligned with the global baseline. In practice, the gradient norms in such cases typically do not reach mathematical infinity. However, they often grow large enough and trigger severe gradient spikes, thus destabilizing the training process of the entire multi-agent LLM system.

\subsection{Agent-Wise Remedy}
\label{sec:remedy}
Fortunately, Proposition~\ref{prop:gradient-inflation} suggests a straightforward and effective remedy: calibrating each agent’s advantages using reward statistics computed exclusively on the steps where that agent is active.
Specifically, we replace the global baseline $(\mu,\sigma)$ with $(\mu_k,\sigma_k)$, which ensures that $(\sigma_k^2 + (\mu_k - \mu)^2)/\sigma^2=1$. In practice, this corresponds to normalizing each agent’s reward using its own empirical mean and variance: 
\begin{equation}
    A^{i,k}_{\mathrm{agent}} = \frac{R^i - \mu_k}{\sigma_k},
\end{equation}
where $\mu_k \triangleq \frac{1}{|\mathcal{Y}_k|}\sum_{\bm{a}_t^i\in \mathcal{Y}_k} R^i$ and $\sigma_k^2 \triangleq \frac{1}{|\mathcal{Y}_k|}\sum_{\bm{a}_t^i\in \mathcal{Y}_k} (R^i - \mu_k)^2$. Therefore, an analysis analogous to
Lemma~\ref{lem:global-second-moment} yields
\begin{equation}
    \mathbb{E}_{\bm{a}_t^i\sim \mathcal{Y}_k}\bigl[\|\tilde{g}_k^{\mathrm{agent}}\|^2\bigr]=\mathbb{E}_{\bm{a}_t^i\sim \mathcal{Y}_k}\bigl[\|z_{i,t}^{(k)}\|^2\bigr] +\Delta_k,
\end{equation}
where $\tilde{g}_k^{\mathrm{agent}}=\frac{R^i - \mu_k}{\sigma_k}\, z_{i,t}^{(k)}$. Thus, under agent-wise normalization, the second moment of each agent’s gradient is bounded purely by its own score statistics. Crucially, this effect is inherently \emph{multi-agent}: as the number of specialized agents increases and their roles become more heterogeneous, a single global baseline is increasingly likely to be badly aligned with some agents, leading to gradient norm explosion. As shown in Figure~\ref{fig:method_comparison}, a simple agent-wise remedy, by adapting to each agent’s own statistics, achieves keeping all gradients in a comparable, well-conditioned range, while still enabling cooperative optimization of the overall multi-agent LLM system.

\begin{figure*}[t]
   \begin{center}
   \includegraphics[width=1.0\linewidth]{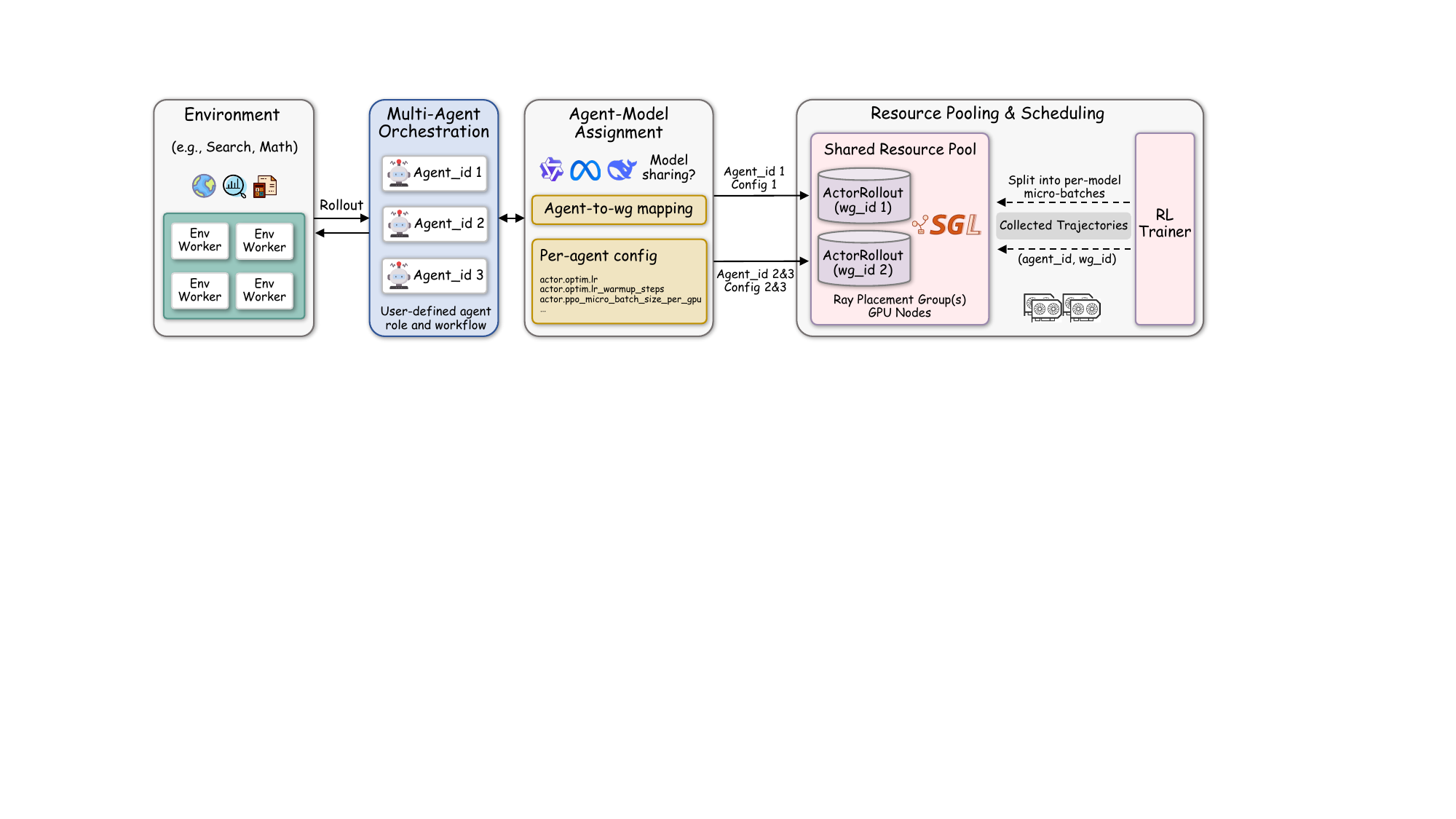}
   \end{center}
   \vspace{-0.1in}
   \caption{Overview of multi-agent LLM RL framework. A multi-agent orchestrator manages distributed rollouts, agents are mapped to LLM worker groups with optional LLM  sharing, and a shared resource pool schedules actor backends for efficient inference and per-model optimization.}
   \label{fig:framework}
\end{figure*}
\subsection{Framework for Multi-Agent LLM RL}
\label{sec:framework}
We next present a unified system framework that realizes end-to-end RL post-training for multi-agent LLMs. As illustrated in Figure~\ref{fig:framework}, the system is designed to ensure well-conditioned gradient updates across agents, while maintaining scalable orchestration, flexible agent-model assignment, per-agent optimization configs, and efficient hardware utilization for multi-agent rollouts.

\textbf{Multi-Agent Orchestration.}~
Our system is coordinated by a multi-agent trajectory collector, which manages the distributed interaction between the multi-agent LLM system and the environment. It delegates the rollout to a user-defined multi-agent orchestra, which governs the agent roles and execution flow. The orchestra dynamically selects and invokes agent policies based on the current state or prior agent outputs, enabling flexible and conditional control over multi-agent decision-making.

\textbf{Agent-Model Assignment.}~
A core assignment logic maps logical agents ($1,\dots,K$) to physical LLM worker groups (\texttt{wg\_id}). In non-shared settings, each agent $k$ is assigned a distinct worker group (e.g., 7B and 3B models). Conversely, in shared settings, all agents configured with the same model are mapped to a single, shared worker group, allowing joint training and inference while reusing model weights. 

\textbf{Per-Agent Configuration.}~
\methodname{} supports agent-specific training hyperparameters for granular control. This allows configurations like \texttt{actor.optim.lr} to be specified on a per-agent basis. Our system injects the $k$-th hyperparameter set into the configuration for agent $k$, which is then attached to its corresponding LLM work group. A runtime check ensures that all agents sharing the same worker group utilize identical configurations.

\textbf{Shared Resource Pooling and Scheduling.}~
This component decouples logical agent-model assignments from physical resource placement. A resource pool manager provisions hardware resources (e.g., GPUs) into named pools. All LLM actor backends (one for each \texttt{wg\_id}) are mapped to the \texttt{ActorRollout} role. To support high-throughput and low-latency decoding in multi-agent rollouts, these actor backends use \texttt{sglang}~\citep{zheng2024sglang} as the inference engine. This allows them to be co-provisioned within the same shared resource pool using Ray placement groups, enabling scalable scheduling of multiple concurrent LLMs. Agent calls are routed by an \texttt{agent\_to\_wg\_mapping} ($\texttt{agent\_id} \rightarrow \texttt{wg\_id}$). This mapping dynamically dispatches the agent's generation request to the correct backend worker group (\texttt{actor\_rollout\_wg[wg\_id]}).
In the optimization phase, the trainer partitions the aggregated batch $\mathcal{B}$ into per-model micro-batches ${\mathcal{B}_{\text{wg}}}$ according to their worker group ID. Policy updates are then performed for each worker group, ensuring that gradients from an agent's trajectories only update its designated LLM backend.
\begin{figure*}[t]
   \begin{center}
   \includegraphics[width=1.0\linewidth]{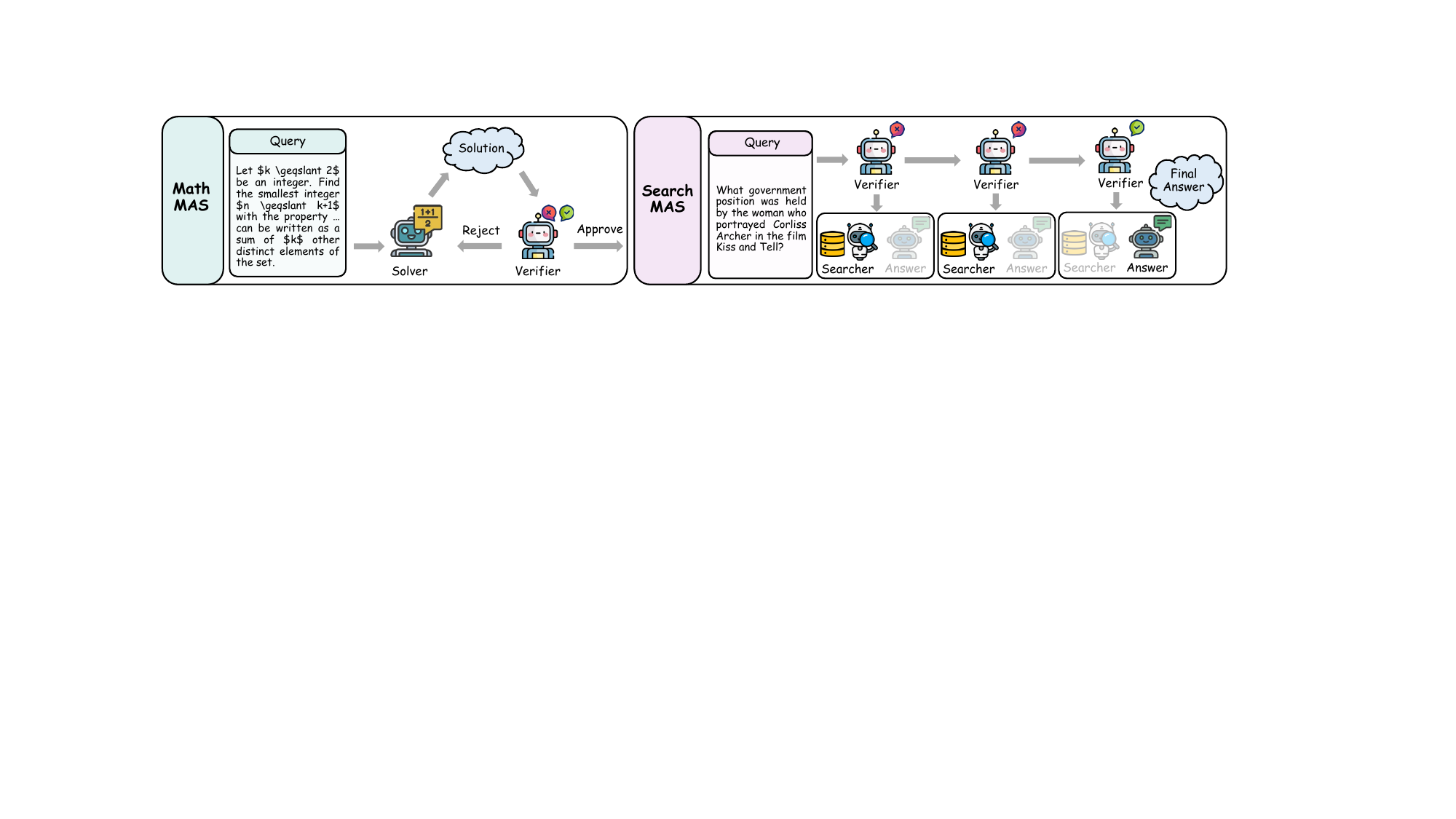}
   \end{center}
   \vspace{-0.1in}
   \caption{Illustration of the orchestrations. \textbf{Left}: Math orchestration uses a two-agent loop, where a solver proposes candidate solutions and a verifier evaluates and either approves or requests refinement. \textbf{Right}: Multi-turn search orchestration uses a hierarchical three-agent pipeline, where a top-level verifier selectively invokes either a search agent to retrieve external information or an answer agent to produce the final result.}
   \label{fig:orchestration}
\end{figure*}
\section{Experiment}
In this section, we evaluate \methodname{} on two multi-agent orchestrations: a two-agent loop pipeline for math reasoning and a three-agent hierarchical pipeline for multi-turn search (Figure~\ref{fig:orchestration}), under both LLM sharing and non-sharing settings. Specifically, we aim to demonstrate: (\textbf{1}) the consistent performance gains of \methodname{} over vanilla GRPO; (\textbf{2}) more stable training dynamics and smoother gradient norms of \methodname{}; (\textbf{3}) the individual contribution of each normalization component via a detailed ablation study; and (\textbf{4}) the practical efficiency and compatibility of \methodname{} when applied to heterogeneous agent-model assignments.

\begin{table*}[t]
\centering
\caption{Math results on Qwen3-4B/8B. We report the avg@16 and pass@16 of single-agent training with GRPO and multi-agent training under LLM sharing/non-sharing, using vanilla GRPO and \methodname{}. Subscripts for \methodname{} denote $\Delta$ over the vanilla GRPO under the same multi-agent setting.}
\label{tab:main_result_math}
\resizebox{1.0\textwidth}{!}{
\begin{tabular}{
l
|cc
|cc
>{\columncolor{TableBG}}c >{\columncolor{TableBG}}c
|cc
>{\columncolor{TableBG}}c >{\columncolor{TableBG}}c
}
\toprule
\multirow{3}{*}{\textbf{Benchmark}} &
\multicolumn{2}{c|}{\textbf{Single-Agent}} &
\multicolumn{4}{c|}{\textbf{Multi-Agent \& LLM Sharing}} &
\multicolumn{4}{c}{\textbf{Multi-Agent \& LLM Non-Sharing}} \\
& \multicolumn{2}{c|}{GRPO} &
\multicolumn{2}{c}{GRPO} & \multicolumn{2}{c|}{\methodname{}} &
\multicolumn{2}{c}{GRPO} & \multicolumn{2}{c}{\methodname{}} \\
\rowcolor{white!4}
& avg@16 & pass@16 &
avg@16 & pass@16 & avg@16 & pass@16 &
avg@16 & pass@16 & avg@16 & pass@16 \\
\midrule

\rowcolor{black!4}\multicolumn{11}{c}{\textit{Qwen3-4B}} \\
\midrule
AIME'24    & 38.8 & 63.3 & 39.3 & 64.0 & \valdelta{39.3}{\gzero{0.0}\phantom{00}} & \valdelta{63.3}{\gdown{0.7}\phantom{0}} & 42.7 & 73.3 & \valdelta{46.9}{\gup{4.2}} & \valdelta{80.0}{\gup{6.7}} \\
AIME'25    & 33.1 & 56.7 & 31.4 & 53.3 & \valdelta{38.1}{\gup{6.7}\phantom{0}} & \valdelta{63.3}{\gup{10.0}} & 35.6 & 63.3 & \valdelta{38.1}{\gup{2.5}} & \valdelta{66.7}{\gup{3.4}} \\
AMC'23     & 83.5 & 95.0 & 85.6 & 95.0 & \valdelta{87.3}{\gup{1.7}\phantom{0}} & \valdelta{95.0}{\gzero{0.0}\phantom{00}} & 83.5 & 95.0 & \valdelta{89.5}{\gup{6.0}} & \valdelta{97.5}{\gup{2.5}} \\
MATH500    & 89.0 & 94.2 & 89.5 & 96.2 & \valdelta{90.5}{\gup{1.0}\phantom{0}} & \valdelta{96.0}{\gdown{0.2}\phantom{0}} & 89.6 & 95.0 & \valdelta{92.4}{\gup{2.8}} & \valdelta{97.0}{\gup{2.0}} \\
Minerva    & 37.9 & 49.6 & 37.5 & 50.0 & \valdelta{40.9}{\gup{3.4}\phantom{0}} & \valdelta{53.3}{\gup{3.3}\phantom{0}} & 37.5 & 50.7 & \valdelta{39.0}{\gup{1.5}} & \valdelta{51.5}{\gup{0.8}} \\
Olympiad   & 53.3 & 66.5 & 57.6 & 65.6 & \valdelta{58.2}{\gup{0.6}\phantom{0}} & \valdelta{68.6}{\gup{3.0}\phantom{0}} & 56.3 & 68.9 & \valdelta{60.9}{\gup{4.6}} & \valdelta{73.6}{\gup{4.7}} \\
\textbf{Average} & 55.9 & 70.9 & 56.8 & 70.7 & \valdelta{59.0}{\gup{2.2}\phantom{0}} & \valdelta{73.2}{\gup{2.6}\phantom{0}} & 57.5 & 74.4 & \valdelta{61.1}{\gup{3.6}} & \valdelta{77.7}{\gup{3.3}} \\

\midrule
\rowcolor{black!4}\multicolumn{11}{c}{\textit{Qwen3-8B}} \\
\midrule
AIME'24    & 36.0 & 67.3 & 42.7 & 66.7 & \valdelta{54.8}{\gup{12.1}} & \valdelta{80.0}{\gup{13.3}} & 42.9 & 70.0 & \valdelta{44.6}{\gup{1.7}} & \valdelta{73.3}{\gup{3.3}} \\
AIME'25    & 32.7 & 50.0 & 31.4 & 53.3 & \valdelta{39.4}{\gup{8.0}\phantom{0}} & \valdelta{70.0}{\gup{16.7}} & 31.8 & 53.3 & \valdelta{41.5}{\gup{9.7}} & \valdelta{56.7}{\gup{3.4}} \\
AMC'23     & 87.0 & 95.0 & 87.3 & 95.0 & \valdelta{88.9}{\gup{1.6}\phantom{0}} & \valdelta{97.5}{\gup{2.5}\phantom{0}} & 86.1 & 95.0 & \valdelta{87.5}{\gup{1.4}} & \valdelta{95.0}{\gzero{0.0}\phantom{0}} \\
MATH500    & 89.9 & 94.8 & 89.6 & 96.2 & \valdelta{91.3}{\gup{1.7}\phantom{0}} & \valdelta{96.0}{\gdown{0.2}\phantom{0}} & 90.5 & 96.6 & \valdelta{90.7}{\gup{0.2}} & \valdelta{96.2}{\gdown{0.4}} \\
Minerva    & 36.0 & 46.7 & 37.5 & 50.0 & \valdelta{39.9}{\gup{2.4}\phantom{0}} & \valdelta{49.6}{\gdown{0.4}\phantom{0}} & 39.2 & 50.7 & \valdelta{40.9}{\gup{1.7}} & \valdelta{54.0}{\gup{3.3}} \\
Olympiad   & 57.9 & 67.5 & 58.2 & 71.4 & \valdelta{59.3}{\gup{1.1}\phantom{0}} & \valdelta{72.4}{\gup{1.0}\phantom{0}} & 58.2 & 67.6 & \valdelta{59.0}{\gup{0.8}} & \valdelta{70.2}{\gup{2.6}} \\
\textbf{Average} & 56.6 & 70.2 & 57.8 & 72.1 & \valdelta{62.3}{\gup{4.5}\phantom{0}} & \valdelta{77.6}{\gup{5.5}\phantom{0}} & 58.1 & 72.2 & \valdelta{60.7}{\gup{2.6}} & \valdelta{74.2}{\gup{2.0}} \\
\bottomrule
\end{tabular}
}
\end{table*}
\subsection{Math}
\textbf{Math Orchestration.}~
We first evaluate \methodname{} on challenging mathematical reasoning tasks using a two-agent architecture (a \emph{solver agent} and a \emph{verifier agent}), as shown in Figure~\ref{fig:orchestration}. In each episode, the solver agent proposes candidate solutions, while the verifier agent inspects the solver’s reasoning and decides whether the current solution should be accepted or revised. If the verifier deems the solution unsatisfactory, the system triggers another round of solver refinement. Otherwise, the interaction terminates and the final answer is emitted. We use Qwen3-4B/8B~\citep{yang2025qwen3} as the LLM policy for each agent and evaluate both the shared-LLM and non-shared settings.

\textbf{Setup.}~
For training, we adopt the training corpus from DAPO-Math~\citep{yu2025dapo}, which consists of diverse math problems paired with verifiable solutions and reward signals. The rollout group size is set to 8. For evaluation, we report the avg@16 and pass@16 results on a suite of competitive benchmarks: AIME'24, AIME'25, AMC'23, MATH500~\citep{hendrycks2021measuring}, Minerva, and OlympiadBench~\citep{he2024olympiadbench}. All other experimental details are available in Appendix~\ref{app:experimental_details_math}.

\textbf{Results.}~
As shown in Table~\ref{tab:main_result_math}, \methodname{} improves over vanilla GRPO under both LLM sharing and LLM non-sharing. While applying GRPO directly to the multi-agent setting can reach decent average scores, the gains are not always consistent across benchmarks, and some hard splits may not improve. This suggests that a single global normalization of GRPO can make multi-agent training less reliable. In contrast, \methodname{} uses per-agent normalization to keep each agent’s update on a similar scale, which leads to more consistent improvements across datasets and settings (an overall increase of 5.6\% in avg@16 and 4.6\% in pass@16). For Qwen3-4B, \methodname{} improves the performance under the sharing setting from 56.8/70.7 to 59.0/73.2 and improves the non-sharing setting from 57.5/74.4 to 61.1/77.7. The pronounced gain in the non-shared setting suggests that when agents possess independent parameters, their behavioral distributions diverge more significantly, making \methodname{}'s agent-specific calibration even more critical. Similarly, for Qwen3-8B, we observe strong gains in both configurations. The most significant improvements occur on the challenging AIME benchmarks (e.g., 42.7/66.7 $\rightarrow$ 54.8/80.0 on AIME’24), demonstrating that high-variance gradients from GRPO can easily disrupt the learning of fragile, long-horizon reasoning chains. \methodname{} guarantees stable convergence, allowing agents to robustly learn the precise, multi-stage deductions.

\begin{table*}[t]
\centering
\caption{Multi-turn search QA results on Qwen2.5-3B/7B. We report the avg@16 and pass@16 of single-agent training with GRPO and multi-agent training under LLM sharing/non-sharing, using vanilla GRPO and \methodname{}. Subscripts for \methodname{} denote $\Delta$ over the vanilla GRPO under the same multi-agent setting.}
\label{tab:main_result_search}
\resizebox{1.0\textwidth}{!}{
\begin{tabular}{
l|cc|cc>{\columncolor{TableBG}}c >{\columncolor{TableBG}}c|cc >{\columncolor{TableBG}}c >{\columncolor{TableBG}}c
}
\toprule
\multirow{3}{*}{\textbf{Benchmark}} &
\multicolumn{2}{c|}{\textbf{Single-Agent}} &
\multicolumn{4}{c|}{\textbf{Multi-Agent \& LLM Sharing}} &
\multicolumn{4}{c}{\textbf{Multi-Agent \& LLM Non-Sharing}} \\
& \multicolumn{2}{c|}{GRPO} &
\multicolumn{2}{c}{GRPO} & \multicolumn{2}{c|}{\methodname{}} &
\multicolumn{2}{c}{GRPO} & \multicolumn{2}{c}{\methodname{}} \\
\rowcolor{white!4}
& avg@16 & pass@16 &
avg@16 & pass@16 & avg@16 & pass@16 &
avg@16 & pass@16 & avg@16 & pass@16 \\
\midrule

\rowcolor{black!4}\multicolumn{11}{c}{\textit{Qwen2.5-3B}}\\
\midrule
NQ         & 40.6 & 54.7 & 41.0 & 59.0 & \valdelta{43.8}{\gup{2.8}} & \valdelta{58.5}{\gdown{0.5}} & 43.8 & 54.5 & \valdelta{44.6}{\gup{0.8}\phantom{0}} & \valdelta{58.1}{\gup{3.6}\phantom{0}}\\
TriviaQA   & 58.1 & 68.8 & 57.9 & 68.4 & \valdelta{61.7}{\gup{3.8}} & \valdelta{70.1}{\gup{1.7}} & 60.6 & 70.8 & \valdelta{61.1}{\gup{0.5}\phantom{0}} & \valdelta{71.7}{\gup{0.9}\phantom{0}}\\
PopQA      & 44.2 & 49.6 & 43.2 & 58.0 & \valdelta{45.0}{\gup{1.8}} & \valdelta{57.6}{\gdown{0.4}} & 45.6 & 54.5 & \valdelta{46.5}{\gup{0.9}\phantom{0}} & \valdelta{57.4}{\gup{2.9}\phantom{0}}\\
HotpotQA   & 31.8 & 40.9 & 32.5 & 48.0 & \valdelta{33.3}{\gup{0.8}} & \valdelta{51.2}{\gup{3.2}} & 32.5 & 45.2 & \valdelta{35.3}{\gup{2.8}\phantom{0}} & \valdelta{51.1}{\gup{5.9}\phantom{0}}\\
2Wiki      & 29.9 & 43.7 & 33.7 & 64.0 & \valdelta{34.1}{\gup{0.4}} & \valdelta{64.0}{\gzero{0.0}} & 29.2 & 48.9 & \valdelta{34.9}{\gup{5.7}\phantom{0}} & \valdelta{60.2}{\gup{11.3}}\\
MuSiQue    & 7.9 & 14.6 & 9.1 & 26.5 & \valdelta{10.2}{\gup{1.1}} & \valdelta{25.8}{\gdown{0.7}} & 8.6 & 19.2 & \valdelta{10.4}{\gup{1.8}\phantom{0}} & \valdelta{26.1}{\gup{6.9}\phantom{0}}\\
Bamboogle  & 15.3 & 27.2 & 26.4 & 46.4 & \valdelta{28.6}{\gup{2.2}} & \valdelta{49.6}{\gup{3.2}} & 21.0 & 33.6 & \valdelta{25.4}{\gup{4.4}\phantom{0}} & \valdelta{46.4}{\gup{12.8}}\\
\textbf{Average} & 32.5 & 42.8 & 34.8 & 52.9 & \valdelta{36.7}{\gup{1.8}} & \valdelta{53.8}{\gup{0.9}} & 34.5 & 46.7 & \valdelta{36.9}{\gup{2.4}\phantom{0}} & \valdelta{53.0}{\gup{6.3}\phantom{0}}\\

\midrule
\rowcolor{black!4}\multicolumn{11}{c}{\textit{Qwen2.5-7B}}\\
\midrule
NQ         & 46.4 & 57.6 & 45.2 & 60.0 & \valdelta{47.4}{\gup{2.2}} & \valdelta{60.7}{\gup{0.7}} & 27.1 & 39.0 & \valdelta{47.7}{\gup{20.6}} & \valdelta{59.5}{\gup{20.5}}\\
TriviaQA   & 63.1 & 72.4 & 63.9 & 70.9 & \valdelta{63.1}{\gdown{0.8}} & \valdelta{71.2}{\gup{0.3}} & 53.1 & 64.4 & \valdelta{63.4}{\gup{10.3}} & \valdelta{72.7}{\gup{8.3}\phantom{0}}\\
PopQA      & 47.2 & 53.6 & 43.9 & 55.0 & \valdelta{45.9}{\gup{2.0}} & \valdelta{57.3}{\gup{2.3}} & 20.7 & 27.9 & \valdelta{46.7}{\gup{26.0}} & \valdelta{57.8}{\gup{29.9}}\\
HotpotQA   & 43.0 & 55.1 & 40.3 & 55.0 & \valdelta{42.5}{\gup{2.2}} & \valdelta{56.0}{\gup{1.0}} & 24.4 & 36.2 & \valdelta{44.0}{\gup{19.6}} & \valdelta{57.5}{\gup{21.3}}\\
2Wiki      & 40.6 & 61.6 & 41.6 & 67.8 & \valdelta{42.0}{\gup{0.4}} & \valdelta{67.1}{\gdown{0.7}} & 30.3 & 51.2 & \valdelta{45.4}{\gup{15.1}} & \valdelta{68.1}{\gup{16.9}}\\
MuSiQue    & 17.8 & 34.6 & 15.2 & 31.7 & \valdelta{16.7}{\gup{1.5}} & \valdelta{32.1}{\gup{0.4}} & 8.3 & 18.1 & \valdelta{19.4}{\gup{11.1}} & \valdelta{34.9}{\gup{16.8}}\\
Bamboogle  & 36.7 & 54.4 & 40.1 & 58.4 & \valdelta{40.1}{\gzero{0.0}\phantom{0}} & \valdelta{59.2}{\gup{0.8}} & 31.9 & 46.4 & \valdelta{39.8}{\gup{7.9}\phantom{0}} & \valdelta{57.6}{\gup{11.2}}\\
\textbf{Average} & 42.1 & 55.6 & 41.5 & 57.0 & \valdelta{42.5}{\gup{1.1}} & \valdelta{57.7}{\gup{0.7}} & 28.0 & 40.5 & \valdelta{43.8}{\gup{15.8}} & \valdelta{58.3}{\gup{17.8}}\\
\bottomrule
\end{tabular}
}
\end{table*}

\subsection{Multi-Turn Search}
\textbf{Search Orchestration.}~
We then evaluate \methodname{} on the multi-turn search tool-calling task. To this end, we design a hierarchical workflow comprising three agents: a \emph{verifier agent}, a \emph{search agent}, and an \emph{answer agent}, as shown in Figure~\ref{fig:orchestration}. At the top level, the verifier agent determines whether the information currently available is sufficient to answer the query. If not, it delegates downward to the search agent, which is responsible for retrieving additional external evidence. Once the verifier agent judges that the information is adequate, it invokes the answer agent, which synthesizes all retrieved evidence into a final answer. We use Qwen2.5-3B/7B~\citep{bai2025qwen2} as the LLM policy for each agent and evaluate both the shared-LLM and non-shared settings.

\textbf{Setup.}~
Our experimental setup follows Search-R1~\citep{jin2025search}. We employ E5~\citep{wang2022text} as the retriever. The rollout group size is set to 5 and the max turn is set to 4. For evaluation, we consider both single-hop QA benchmarks (NQ~\citep{kwiatkowski2019natural}, TriviaQA~\citep{joshi2017triviaqa}, PopQA~\citep{mallen2022not}) and multi-hop QA benchmarks (HotpotQA~\citep{yang2018hotpotqa}, 2WikiMultiHopQA~\citep{ho2020constructing}, MuSiQue~\citep{trivedi2022musique}, Bamboogle~\citep{press2022measuring}) and report the avg@16 and pass@16 results. For training, we use the mixture of NQ and HotpotQA. All other experimental details are available in Appendix~\ref{app:experimental_details_search}.


\textbf{Results.}~
As shown in Table~\ref{tab:main_result_search}, the instability shows up more clearly in multi-turn search, as errors can snowball across tool calls and across agents. In this setting, vanilla GRPO is especially risky when LLMs are not shared, since each agent can drift and the same global scaling may no longer match their learning dynamics. A clear example of this failure occurs with Qwen2.5-7B (non-sharing), where vanilla GRPO learns to avoid calling search agents entirely due to high gradient norms, leading to a severe performance drop (28.0/40.5). In contrast, \methodname{} effectively mitigates this risk and yields consistent improvements, with an overall increase of 15.2\% in avg@16 and 13.1\% in pass@16. Notably, \methodname{} restores the performance of Qwen2.5-7B to 43.8/58.3, a result that not only far exceeds the vanilla baseline but also surpasses both the single-agent baseline and the shared-LLM setting. This trend, consistent with our observations in math tasks, highlights that LLM non-sharing can be hurt badly without proper stabilization, and reducing training noise at the agent level becomes crucial.

\begin{figure*}[t]
   \begin{center}
   \includegraphics[width=1.0\linewidth]{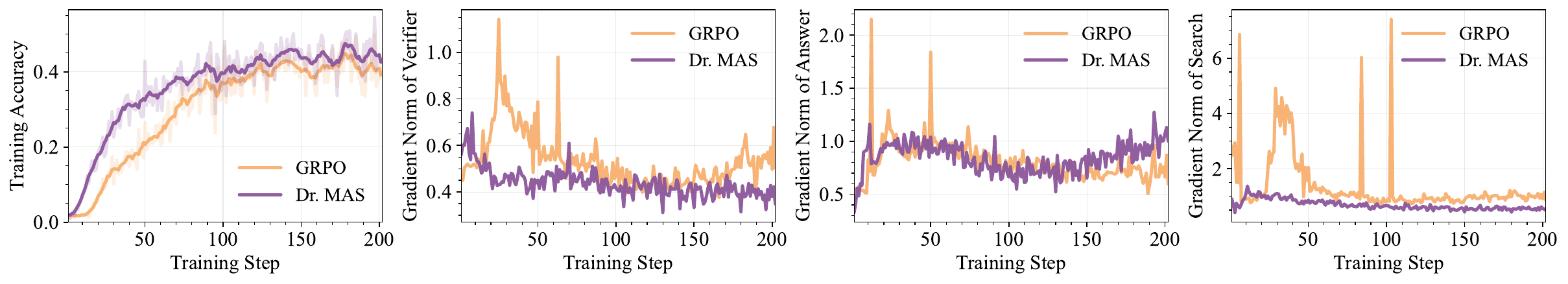}
   \end{center}
   \vspace{-0.1in}
    \caption{Comparison of training accuracy and gradient norm between GRPO and \methodname{}. The results are recorded during multi-agent RL post-training for three-agent search orchestration under LLM non-sharing (Qwen2.5-3B).}
   \label{fig:grad_norm_search}
\end{figure*}
\subsection{Gradient-Norm Instability}
Next, we investigate the gradient-norm instability by tracking the training accuracy and per-agent gradient norms during RL post-training for the three-agent search orchestration (see Appendix~\ref{app:gradient_norm_math} for math results). As shown in Figure~\ref{fig:grad_norm_search}, vanilla GRPO induces frequent, high-magnitude gradient norm spikes. The \emph{search} agent has the largest spikes (reaching very high values early and again around the middle of training), the \emph{answer} agent also exhibits large spikes at the beginning, and the \emph{verifier} shows noticeable peaks as well. These spikes mean that some steps produce unusually large updates, which makes training noisy and harder to control.

\methodname{} mitigates this failure via the agent-wise remedy that normalizes advantages per agent, keeping per-agent update scales better calibrated. As illustrated in Figure~\ref{fig:grad_norm_search}, \methodname{} keeps the gradient norms of all three agents much smoother and at a lower level throughout training, therefore achieving notable performance gains in Tables~\ref{tab:main_result_math} and \ref{tab:main_result_search}.

\subsection{Ablation Study}\label{sec:ablation}
\begin{wraptable}{r}{0.55\textwidth}
\vspace{-0.15in}
\small
\centering
\caption{Ablation study of different advantage normalization configurations on the search task. We report the avg@16 and pass@16 across all datasets. Subscripts denote $\Delta$ over the vanilla GRPO.}
\label{tab:ablation}
\begin{tabular}{l|ccc>{\columncolor{TableBG}}c}
\toprule
\multirow{2}{*}{\textbf{Metric}} & \multicolumn{4}{c}{\textbf{Normalization Configuration}} \\
\cmidrule(l){2-5}
\rowcolor{white!4} & $(\mu, \sigma)$ & $(\mu_k, \sigma)$ & $(\mu, \sigma_k)$ & $(\mu_k, \sigma_k)$ \\
\midrule
avg@16 & 28.0 & \valdelta{39.1}{\gup{11.1}} & \valdelta{42.9}{\gup{14.9}} & \valdelta{43.8}{\gup{15.8}} \\
pass@16 & 40.5 & \valdelta{53.5}{\gup{13.0}} & \valdelta{57.6}{\gup{17.1}} & \valdelta{58.3}{\gup{17.8}} \\
\bottomrule
\end{tabular}
\end{wraptable}
In this part, we conduct an ablation study on the multi-turn search task using Qwen2.5-7B under the LLM non-sharing setting. We compare four advantage normalization configurations: global statistics $(\mu,\sigma)$ (i.e., GRPO), per-agent mean with global standard deviation $(\mu_k,\sigma)$, global mean with per-agent standard deviation $(\mu,\sigma_k)$, and fully per-agent normalization $(\mu_k,\sigma_k)$ (i.e., \methodname{}).

As shown in Table~\ref{tab:ablation}, GRPO performs poorly, indicating that global normalization is a poor match for multi-agent LLM training where agents play different roles and thus exhibit different advantage distributions. Adding \emph{either} per-agent mean or per-agent standard deviation already brings large improvements. The per-agent standard deviation $(\mu,\sigma_k)$ brings a bigger gain, likely because agents differ more in the spread of their advantages than in the average level. Finally, combining both per-agent mean and standard deviation, \methodname{} with fully agent-wise remedy $(\mu_k,\sigma_k)$ achieves the best results, showing that setting both the mean and the scale per agent gives the most reliable learning signal.

\subsection{Heterogeneous Model Assignment}
\label{sec:hetero_search_cost}
\begin{wrapfigure}{r}{0.55\textwidth}
\vspace{-0.1in}
   \begin{center}
   \includegraphics[width=1.0\linewidth]{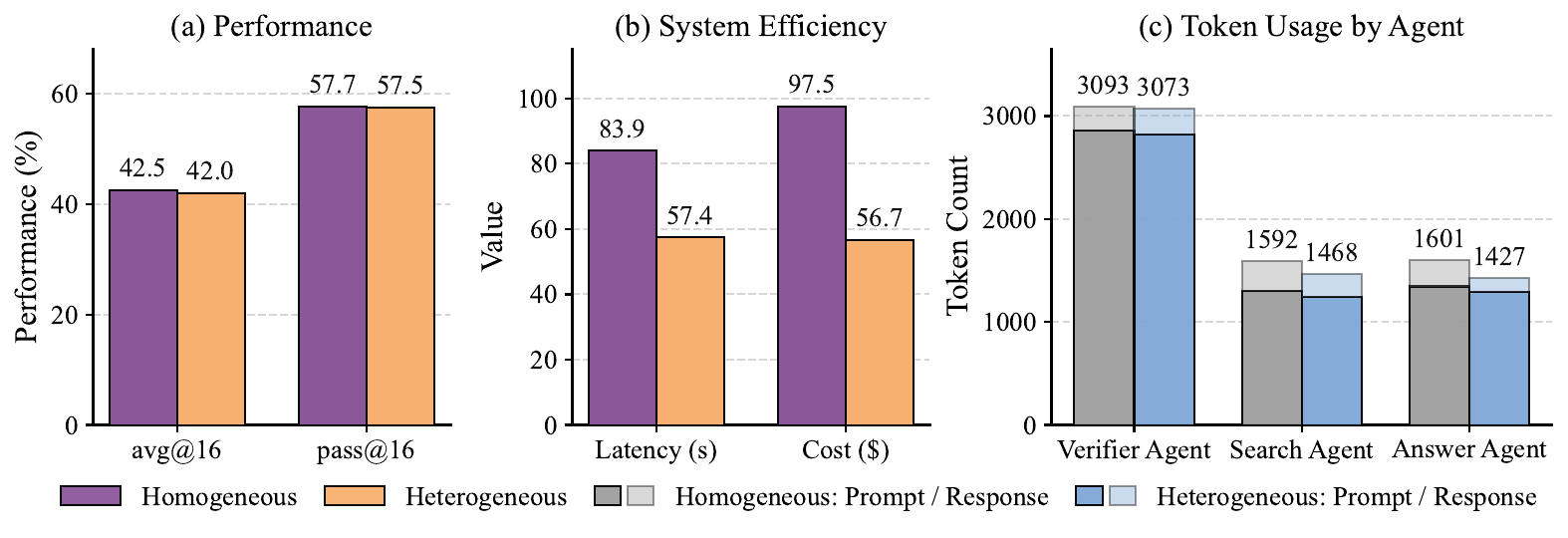}
   \end{center}
   \vspace{-0.1in}
    \caption{Performance and efficiency comparison between homogeneous (all 7B models) and heterogeneous (7B for Verifier, 3B for Search/Answer) model assignment on search tasks.
    Token counts are the average tokens per trajectory for each agent.
    Cost (\$) is estimated using OpenRouter market prices (7B: \$0.30/M tokens, 3B: \$0.06/M tokens) and reported as the total inference cost over the full test set (51.7k samples).}
   \label{fig:heterogeneous_efficiency}
   \vspace{-0.15in}
\end{wrapfigure}
At last, we explore the practical efficiency of \methodname{} when applied to heterogeneous agent-model assignments, where agents with different capacities are combined to optimize performance and cost. We compare a homogeneous baseline where all three agents (verifier, search, and answer) use Qwen2.5-7B, against a heterogeneous setting where the verifier uses Qwen2.5-7B while the search and answer agents use Llama-3.2-3B-Instruct~\citep{grattafiori2024llama}.

As shown in Figure~\ref{fig:heterogeneous_efficiency}, the heterogeneous system maintains performance levels nearly identical to the all-7B baseline, and the average token usage per trajectory remains comparable, with the heterogeneous setup even showing a slight reduction in total volume. This suggests that, in hierarchical multi-agent system, assigning a stronger model to the top-level verifier is sufficient to preserve overall decision quality. By deploying smaller, more efficient models to the low-level agents, the heterogeneous system achieved a 31.6\% reduction in latency and a 41.8\% reduction in total API cost. These findings demonstrate that strategic agent-model assignment facilitates a more flexible and cost-effective multi-agent deployment without sacrificing task precision.

\section{Conclusions and Limitations}
In this work, we studied RL post-training for multi-agent LLM systems and found that directly extending GRPO with a single global advantage baseline can be brittle when agents have different reward statistics, leading to gradient spikes and unstable post-training. To address this issue, we proposed \methodname{}, which normalizes advantages for each agent using its own reward mean and variance, and we also built an end-to-end training framework that supports multi-agent orchestration, optional LLM sharing and non-sharing, per-agent optimization settings, and efficient resource pooling. Across a two-agent math loop and a three-agent multi-turn search pipeline, \methodname{} consistently improves over vanilla GRPO and yields more stable training under both sharing and non-sharing settings. Despite these improvements, \methodname{} does not resolve all sources of instability in multi-agent LLM RL (e.g., credit assignment across agents and turns). Furthermore, although our framework supports flexible multi-agent orchestration and resource pooling, we have not evaluated settings with a much larger number of agents. In such scenarios, resource allocation and potential asynchronous execution issues may become more challenging and remain open questions for future work.

{\small
\bibliography{main}

@book{sutton2018reinforcement,
  title={Reinforcement Learning: An Introduction},
  author={Sutton, Richard S and Barto, Andrew G},
  year={2018},
  publisher={MIT press}
}

@article{schulman2017proximal,
  title={Proximal policy optimization algorithms},
  author={Schulman, John and Wolski, Filip and Dhariwal, Prafulla and Radford, Alec and Klimov, Oleg},
  journal={arXiv preprint arXiv:1707.06347},
  year={2017}
}

@inproceedings{
tan2024cradle,
title={Cradle: Empowering foundation agents towards general computer control},
author={Tan, Weihao and Zhang, Wentao and Xu, Xinrun and Xia, Haochong and Ding, Ziluo and Li, Boyu and Zhou, Bohan and Yue, Junpeng and Jiang, Jiechuan and Li, Yewen and others},
booktitle={International Conference on Machine Learning},
year={2025},
}

@article{achiam2023gpt,
  title={{GPT-4} technical report},
  author={Achiam, Josh and Adler, Steven and Agarwal, Sandhini and Ahmad, Lama and Akkaya, Ilge and Aleman, Florencia Leoni and Almeida, Diogo and Altenschmidt, Janko and Altman, Sam and Anadkat, Shyamal and others},
  journal={arXiv preprint arXiv:2303.08774},
  year={2023}
}

@inproceedings{wang2024mobile,
  title={{Mobile-Agent-v2}: Mobile Device Operation Assistant with Effective Navigation via Multi-Agent Collaboration},
  author={Wang, Junyang and Xu, Haiyang and Jia, Haitao and Zhang, Xi and Yan, Ming and Shen, Weizhou and Zhang, Ji and Huang, Fei and Sang, Jitao},
  booktitle={Advances in Neural Information Processing Systems},
  volume={37},
  pages={2686--2710},
  year={2024}
}

@article{team2023gemini,
  title={Gemini: A family of highly capable multimodal models},
  author={Team, Gemini and Anil, Rohan and Borgeaud, Sebastian and Alayrac, Jean-Baptiste and Yu, Jiahui and Soricut, Radu and Schalkwyk, Johan and Dai, Andrew M and Hauth, Anja and Millican, Katie and others},
  journal={arXiv preprint arXiv:2312.11805},
  year={2023}
}

@inproceedings{bai2024digirl,
  title={{DigiRL}: Training In-The-Wild Device-Control Agents with Autonomous Reinforcement Learning},
  author={Bai, Hao and Zhou, Yifei and Pan, Jiayi and Cemri, Mert and Suhr, Alane and Levine, Sergey and Kumar, Aviral},
  booktitle={Advances in Neural Information Processing Systems},
  volume={37},
  pages={12461--12495},
  year={2024}
}

@inproceedings{ouyang2022training,
  title={Training language models to follow instructions with human feedback},
  author={Ouyang, Long and Wu, Jeffrey and Jiang, Xu and Almeida, Diogo and Wainwright, Carroll and Mishkin, Pamela and Zhang, Chong and Agarwal, Sandhini and Slama, Katarina and Ray, Alex and others},
  booktitle={Advances in Neural Information Processing Systems},
  volume={35},
  pages={27730--27744},
  year={2022}
}

@article{liu2024deepseek,
  title={{DeepSeek-V3} Technical Report},
  author={Liu, Aixin and Feng, Bei and Xue, Bing and Wang, Bingxuan and Wu, Bochao and Lu, Chengda and Zhao, Chenggang and Deng, Chengqi and Zhang, Chenyu and Ruan, Chong and others},
  journal={arXiv preprint arXiv:2412.19437},
  year={2024}
}

@inproceedings{rafailov2024direct,
  title={Direct preference optimization: Your language model is secretly a reward model},
  author={Rafailov, Rafael and Sharma, Archit and Mitchell, Eric and Manning, Christopher D and Ermon, Stefano and Finn, Chelsea},
  booktitle={Advances in Neural Information Processing Systems},
  volume={36},
  year={2024}
}

@article{yu2025dapo,
  title={{DAPO}: An open-source {LLM} reinforcement learning system at scale},
  author={Yu, Qiying and Zhang, Zheng and Zhu, Ruofei and Yuan, Yufeng and Zuo, Xiaochen and Yue, Yu and Fan, Tiantian and Liu, Gaohong and Liu, Lingjun and Liu, Xin and others},
  journal={arXiv preprint arXiv:2503.14476},
  year={2025}
}

@inproceedings{ahmadian2024back,
  title={Back to Basics: Revisiting REINFORCE Style Optimization for Learning from Human Feedback in {LLMs}},
  author={Ahmadian, Arash and Cremer, Chris and Gall{\'e}, Matthias and Fadaee, Marzieh and Kreutzer, Julia and Pietquin, Olivier and {\"U}st{\"u}n, Ahmet and Hooker, Sara},
  booktitle={Proceedings of the 62nd Annual Meeting of the Association for Computational Linguistics (Volume 1: Long Papers)},
  year={2024},
  pages={12248--12267},
}

@inproceedings{kool2019buy,
  title={Buy 4 reinforce samples, get a baseline for free!},
  author={Kool, Wouter and van Hoof, Herke and Welling, Max},
  booktitle={ICLR 2019 Workshop},
  year={2019},
}

@article{liu2025understanding,
  title={Understanding {R1}-Zero-Like Training: A Critical Perspective},
  author={Liu, Zichen and Chen, Changyu and Li, Wenjun and Qi, Penghui and Pang, Tianyu and Du, Chao and Lee, Wee Sun and Lin, Min},
  journal={arXiv preprint arXiv:2503.20783},
  year={2025}
}

@article{shao2024deepseekmath,
  title={{DeepSeekMath}: Pushing the Limits of Mathematical Reasoning in Open Language Models},
  author={Shao, Zhihong and Wang, Peiyi and Zhu, Qihao and Xu, Runxin and Song, Junxiao and Bi, Xiao and Zhang, Haowei and Zhang, Mingchuan and Li, YK and Wu, Y and others},
  journal={arXiv preprint arXiv:2402.03300},
  year={2024}
}

@article{jin2025search,
  title={{Search-R1}: Training {LLMs} to Reason and Leverage Search Engines with Reinforcement Learning},
  author={Jin, Bowen and Zeng, Hansi and Yue, Zhenrui and Wang, Dong and Zamani, Hamed and Han, Jiawei},
  journal={arXiv preprint arXiv:2503.09516},
  year={2025}
}

@article{ziegler2019fine,
  title={Fine-tuning language models from human preferences},
  author={Ziegler, Daniel M and Stiennon, Nisan and Wu, Jeffrey and Brown, Tom B and Radford, Alec and Amodei, Dario and Christiano, Paul and Irving, Geoffrey},
  journal={arXiv preprint arXiv:1909.08593},
  year={2019}
}

@inproceedings{stiennon2020learning,
  title={Learning to summarize with human feedback},
  author={Stiennon, Nisan and Ouyang, Long and Wu, Jeffrey and Ziegler, Daniel and Lowe, Ryan and Voss, Chelsea and Radford, Alec and Amodei, Dario and Christiano, Paul F},
  booktitle={Advances in Neural Information Processing Systems},
  volume={33},
  pages={3008--3021},
  year={2020}
}

@inproceedings{zhou2024archer,
  title={{ArCHer}: Training Language Model Agents via Hierarchical Multi-Turn {RL}},
  author={Zhou, Yifei and Zanette, Andrea and Pan, Jiayi and Levine, Sergey and Kumar, Aviral},
  booktitle={International Conference on Machine Learning},
  pages={62178--62209},
  year={2024},
  organization={PMLR}
}

@article{qian2025toolrl,
  title={{ToolRL}: Reward is All Tool Learning Needs},
  author={Qian, Cheng and Acikgoz, Emre Can and He, Qi and Wang, Hongru and Chen, Xiusi and Hakkani-T{\"u}r, Dilek and Tur, Gokhan and Ji, Heng},
  journal={arXiv preprint arXiv:2504.13958},
  year={2025}
}

@article{sheng2024hybridflow,
  title={{HybridFlow}: A Flexible and Efficient {RLHF} Framework},
  author={Sheng, Guangming and Zhang, Chi and Ye, Zilingfeng and Wu, Xibin and Zhang, Wang and Zhang, Ru and Peng, Yanghua and Lin, Haibin and Wu, Chuan},
  journal={arXiv preprint arXiv:2409.19256},
  year={2024}
}

@article{bai2025qwen2,
  title={Qwen2.5-{VL} Technical Report},
  author={Bai, Shuai and Chen, Keqin and Liu, Xuejing and Wang, Jialin and Ge, Wenbin and Song, Sibo and Dang, Kai and Wang, Peng and Wang, Shijie and Tang, Jun and others},
  journal={arXiv preprint arXiv:2502.13923},
  year={2025}
}

@article{wang2025ragen,
  title={{RAGEN}: Understanding Self-Evolution in {LLM} Agents via Multi-Turn Reinforcement Learning},
  author={Wang, Zihan and Wang, Kangrui and Wang, Qineng and Zhang, Pingyue and Li, Linjie and Yang, Zhengyuan and Yu, Kefan and Nguyen, Minh Nhat and Liu, Licheng and Gottlieb, Eli and others},
  journal={arXiv preprint arXiv:2504.20073},
  year={2025}
}

@article{kwiatkowski2019natural,
  title={Natural questions: a benchmark for question answering research},
  author={Kwiatkowski, Tom and Palomaki, Jennimaria and Redfield, Olivia and Collins, Michael and Parikh, Ankur and Alberti, Chris and Epstein, Danielle and Polosukhin, Illia and Devlin, Jacob and Lee, Kenton and others},
  journal={Transactions of the Association for Computational Linguistics},
  volume={7},
  pages={453--466},
  year={2019},
  publisher={MIT Press One Rogers Street, Cambridge, MA 02142-1209, USA journals-info~…}
}

@article{yang2018hotpotqa,
  title={{HotpotQA}: A dataset for diverse, explainable multi-hop question answering},
  author={Yang, Zhilin and Qi, Peng and Zhang, Saizheng and Bengio, Yoshua and Cohen, William W and Salakhutdinov, Ruslan and Manning, Christopher D},
  journal={arXiv preprint arXiv:1809.09600},
  year={2018}
}

@article{trivedi2022musique,
  title={{MuSiQue}: Multihop Questions via Single-hop Question Composition},
  author={Trivedi, Harsh and Balasubramanian, Niranjan and Khot, Tushar and Sabharwal, Ashish},
  journal={Transactions of the Association for Computational Linguistics},
  volume={10},
  pages={539--554},
  year={2022},
  publisher={MIT Press One Broadway, 12th Floor, Cambridge, Massachusetts 02142, USA~…}
}

@article{press2022measuring,
  title={Measuring and narrowing the compositionality gap in language models},
  author={Press, Ofir and Zhang, Muru and Min, Sewon and Schmidt, Ludwig and Smith, Noah A and Lewis, Mike},
  journal={arXiv preprint arXiv:2210.03350},
  year={2022}
}

@article{mallen2022not,
  title={When not to trust language models: Investigating effectiveness of parametric and non-parametric memories},
  author={Mallen, Alex and Asai, Akari and Zhong, Victor and Das, Rajarshi and Khashabi, Daniel and Hajishirzi, Hannaneh},
  journal={arXiv preprint arXiv:2212.10511},
  year={2022}
}

@article{ho2020constructing,
  title={Constructing a multi-hop {QA} dataset for comprehensive evaluation of reasoning steps},
  author={Ho, Xanh and Nguyen, Anh-Khoa Duong and Sugawara, Saku and Aizawa, Akiko},
  journal={arXiv preprint arXiv:2011.01060},
  year={2020}
}

@article{joshi2017triviaqa,
  title={{TriviaQA}: A large scale distantly supervised challenge dataset for reading comprehension},
  author={Joshi, Mandar and Choi, Eunsol and Weld, Daniel S and Zettlemoyer, Luke},
  journal={arXiv preprint arXiv:1705.03551},
  year={2017}
}

@article{wang2022text,
  title={Text embeddings by weakly-supervised contrastive pre-training},
  author={Wang, Liang and Yang, Nan and Huang, Xiaolong and Jiao, Binxing and Yang, Linjun and Jiang, Daxin and Majumder, Rangan and Wei, Furu},
  journal={arXiv preprint arXiv:2212.03533},
  year={2022}
}

@inproceedings{
feng2025towards,
title={Towards Efficient Online Tuning of {VLM} Agents via Counterfactual Soft Reinforcement Learning},
author={Lang Feng and Weihao Tan and Zhiyi Lyu and Longtao Zheng and Haiyang Xu and Ming Yan and Fei Huang and Bo An},
booktitle={International Conference on Machine Learning},
volume={267},
pages={16884--16903},
year={2025},
}

@article{yang2025qwen3,
  title={Qwen3 technical report},
  author={Yang, An and Li, Anfeng and Yang, Baosong and Zhang, Beichen and Hui, Binyuan and Zheng, Bo and Yu, Bowen and Gao, Chang and Huang, Chengen and Lv, Chenxu and others},
  journal={arXiv preprint arXiv:2505.09388},
  year={2025}
}

@article{zheng2025group,
  title={Group sequence policy optimization},
  author={Zheng, Chujie and Liu, Shixuan and Li, Mingze and Chen, Xiong-Hui and Yu, Bowen and Gao, Chang and Dang, Kai and Liu, Yuqiong and Men, Rui and Yang, An and others},
  journal={arXiv preprint arXiv:2507.18071},
  year={2025}
}

@article{zhang2025agentorchestra,
  title={{AgentOrchestra}: Orchestrating Hierarchical Multi-Agent Intelligence with the Tool-Environment-Agent ({TEA}) Protocol},
  author={Zhang, Wentao and Zeng, Liang and Xiao, Yuzhen and Li, Yongcong and Cui, Ce and Zhao, Yilei and Hu, Rui and Liu, Yang and Zhou, Yahui and An, Bo},
  journal={arXiv preprint arXiv:2506.12508},
  year={2025}
}

@inproceedings{chang2025main,
  title={{MAIN-RAG}: Multi-agent filtering retrieval-augmented generation},
  author={Chang, Chia-Yuan and Jiang, Zhimeng and Rakesh, Vineeth and Pan, Menghai and Yeh, Chin-Chia Michael and Wang, Guanchu and Hu, Mingzhi and Xu, Zhichao and Zheng, Yan and Das, Mahashweta and others},
  booktitle={Proceedings of the 63rd Annual Meeting of the Association for Computational Linguistics (Volume 1: Long Papers)},
  pages={2607--2622},
  year={2025}
}

@inproceedings{qian2024chatdev,
  title={{ChatDev}: Communicative agents for software development},
  author={Qian, Chen and Liu, Wei and Liu, Hongzhang and Chen, Nuo and Dang, Yufan and Li, Jiahao and Yang, Cheng and Chen, Weize and Su, Yusheng and Cong, Xin and others},
  booktitle={Proceedings of the 62nd Annual Meeting of the Association for Computational Linguistics (Volume 1: Long Papers)},
  pages={15174--15186},
  year={2024}
}

@inproceedings{
hong2024metagpt,
title={Meta{GPT}: Meta Programming for A Multi-Agent Collaborative Framework},
author={Sirui Hong and Mingchen Zhuge and Jonathan Chen and Xiawu Zheng and Yuheng Cheng and Jinlin Wang and Ceyao Zhang and Zili Wang and Steven Ka Shing Yau and Zijuan Lin and Liyang Zhou and Chenyu Ran and Lingfeng Xiao and Chenglin Wu and J{\"u}rgen Schmidhuber},
booktitle={The Twelfth International Conference on Learning Representations},
year={2024},
}

@inproceedings{feng2025group,
  title={Group-in-Group Policy Optimization for {LLM} Agent Training},
  author={Feng, Lang and Xue, Zhenghai and Liu, Tingcong and An, Bo},
  booktitle={Advances in Neural Information Processing Systems},
  year={2025}
}

@article{luo2025agent,
  title={Agent lightning: Train any {AI} agents with reinforcement learning},
  author={Luo, Xufang and Zhang, Yuge and He, Zhiyuan and Wang, Zilong and Zhao, Siyun and Li, Dongsheng and Qiu, Luna K and Yang, Yuqing},
  journal={arXiv preprint arXiv:2508.03680},
  year={2025}
}

@article{wang2025reinforcement,
  title={Reinforcement Learning Optimization for Large-Scale Learning: An Efficient and User-Friendly Scaling Library},
  author={Wang, Weixun and Xiong, Shaopan and Chen, Gengru and Gao, Wei and Guo, Sheng and He, Yancheng and Huang, Ju and Liu, Jiaheng and Li, Zhendong and Li, Xiaoyang and others},
  journal={arXiv preprint arXiv:2506.06122},
  year={2025}
}

@article{fu2025areal,
  title={{AReaL}: A Large-Scale Asynchronous Reinforcement Learning System for Language Reasoning},
  author={Fu, Wei and Gao, Jiaxuan and Shen, Xujie and Zhu, Chen and Mei, Zhiyu and He, Chuyi and Xu, Shusheng and Wei, Guo and Mei, Jun and Wang, Jiashu and others},
  journal={arXiv preprint arXiv:2505.24298},
  year={2025}
}

@article{jiang2025verltool,
  title={{VerlTool}: Towards holistic agentic reinforcement learning with tool use},
  author={Jiang, Dongfu and Lu, Yi and Li, Zhuofeng and Lyu, Zhiheng and Nie, Ping and Wang, Haozhe and Su, Alex and Chen, Hui and Zou, Kai and Du, Chao and others},
  journal={arXiv preprint arXiv:2509.01055},
  year={2025}
}

@article{dong2025agentic,
  title={Agentic reinforced policy optimization},
  author={Dong, Guanting and Mao, Hangyu and Ma, Kai and Bao, Licheng and Chen, Yifei and Wang, Zhongyuan and Chen, Zhongxia and Du, Jiazhen and Wang, Huiyang and Zhang, Fuzheng and others},
  journal={arXiv preprint arXiv:2507.19849},
  year={2025}
}

@article{hendrycks2021measuring,
  title={Measuring mathematical problem solving with the math dataset},
  author={Hendrycks, Dan and Burns, Collin and Kadavath, Saurav and Arora, Akul and Basart, Steven and Tang, Eric and Song, Dawn and Steinhardt, Jacob},
  journal={arXiv preprint arXiv:2103.03874},
  year={2021}
}

@article{zhao2025stronger,
  title={Stronger Together: On-Policy Reinforcement Learning for Collaborative {LLMs}},
  author={Zhao, Yujie and Hu, Lanxiang and Wang, Yang and Hou, Minmin and Zhang, Hao and Ding, Ke and Zhao, Jishen},
  journal={arXiv preprint arXiv:2510.11062},
  year={2025}
}

@article{hu2024openrlhf,
  title={{OpenRLHF}: An Easy-to-use, Scalable and High-performance {RLHF} Framework},
  author={Jian Hu and Xibin Wu and Zilin Zhu and Xianyu and Weixun Wang and Dehao Zhang and Yu Cao},
  journal={arXiv preprint arXiv:2405.11143},
  year={2024}
}

@article{chen2025heterogeneous,
  title={Heterogeneous Group-Based Reinforcement Learning for {LLM}-based Multi-Agent Systems},
  author={Chen, Guanzhong and Yang, Shaoxiong and Li, Chao and Liu, Wei and Luan, Jian and Xu, Zenglin},
  journal={arXiv preprint arXiv:2506.02718},
  year={2025}
}

@article{wan2025rema,
  title={{ReMA}: Learning to meta-think for {LLMs} with multi-agent reinforcement learning},
  author={Wan, Ziyu and Li, Yunxiang and Wen, Xiaoyu and Song, Yan and Wang, Hanjing and Yang, Linyi and Schmidt, Mark and Wang, Jun and Zhang, Weinan and Hu, Shuyue and others},
  journal={arXiv preprint arXiv:2503.09501},
  year={2025}
}

@article{li2025flow,
  title={In-the-Flow Agentic System Optimization for Effective Planning and Tool Use},
  author={Li, Zhuofeng and Zhang, Haoxiang and Han, Seungju and Liu, Sheng and Xie, Jianwen and Zhang, Yu and Choi, Yejin and Zou, James and Lu, Pan},
  journal={arXiv preprint arXiv:2510.05592},
  year={2025}
}

@article{li2025chain,
  title={Chain-of-agents: End-to-end agent foundation models via multi-agent distillation and agentic {RL}},
  author={Li, Weizhen and Lin, Jianbo and Jiang, Zhuosong and Cao, Jingyi and Liu, Xinpeng and Zhang, Jiayu and Huang, Zhenqiang and Chen, Qianben and Sun, Weichen and Wang, Qiexiang and others},
  journal={arXiv preprint arXiv:2508.13167},
  year={2025}
}

@article{xue2025comas,
  title={{CoMAS}: Co-Evolving Multi-Agent Systems via Interaction Rewards},
  author={Xue, Xiangyuan and Zhou, Yifan and Zhang, Guibin and Zhang, Zaibin and Li, Yijiang and Zhang, Chen and Yin, Zhenfei and Torr, Philip and Ouyang, Wanli and Bai, Lei},
  journal={arXiv preprint arXiv:2510.08529},
  year={2025}
}

@misc{marti2025,
  title={{MARTI}: A Framework for Multi-Agent {LLM} Systems Reinforced Training and Inference},
  author={Kaiyan Zhang and Runze Liu and Xuekai Zhu and Kai Tian and Sihang Zeng and Guoli Jia and Yuchen Fan and Xingtai Lv and Yuxin Zuo and Che Jiang and Ziyang Liu and Jianyu Wang and Yuru Wang and Ruotong Zhao and Ermo Hua and Yibo Wang and Shijie Wang and Junqi Gao and Xinwei Long and Youbang Sun and Zhiyuan Ma and Ganqu Cui and Lei Bai and Ning Ding and Biqing Qi and Bowen Zhou},
  year={2025},
  institution={Tsinghua University and Shanghai AI Lab},
  url={https://github.com/TsinghuaC3I/MARTI}
}

@inproceedings{park2025maporl,
  title={{MAPoRL}: Multi-agent post-co-training for collaborative large language models with reinforcement learning},
  author={Park, Chanwoo and Han, Seungju and Guo, Xingzhi and Ozdaglar, Asuman E and Zhang, Kaiqing and Kim, Joo-Kyung},
  booktitle={Proceedings of the 63rd Annual Meeting of the Association for Computational Linguistics (Volume 1: Long Papers)},
  pages={30215--30248},
  year={2025}
}

@article{liu2025spiral,
  title={{SPIRAL}: Self-Play on Zero-Sum Games Incentivizes Reasoning via Multi-Agent Multi-Turn Reinforcement Learning},
  author={Liu, Bo and Guertler, Leon and Yu, Simon and Liu, Zichen and Qi, Penghui and Balcells, Daniel and Liu, Mickel and Tan, Cheston and Shi, Weiyan and Lin, Min and others},
  journal={arXiv preprint arXiv:2506.24119},
  year={2025}
}

@article{liu2025llm,
  title={{LLM} collaboration with multi-agent reinforcement learning},
  author={Liu, Shuo and Chen, Tianle and Liang, Zeyu and Lyu, Xueguang and Amato, Christopher},
  journal={arXiv preprint arXiv:2508.04652},
  year={2025}
}

@article{xue2025simpletir,
  title={{SimpleTIR}: End-to-end reinforcement learning for multi-turn tool-integrated reasoning},
  author={Xue, Zhenghai and Zheng, Longtao and Liu, Qian and Li, Yingru and Zheng, Xiaosen and Ma, Zejun and An, Bo},
  journal={arXiv preprint arXiv:2509.02479},
  year={2025}
}

@article{zeng2025simplerl,
  title={{SimpleRL-Zoo}: Investigating and taming zero reinforcement learning for open base models in the wild},
  author={Zeng, Weihao and Huang, Yuzhen and Liu, Qian and Liu, Wei and He, Keqing and Ma, Zejun and He, Junxian},
  journal={arXiv preprint arXiv:2503.18892},
  year={2025}
}

@article{wang2025spa,
  title={{SPA-RL}: Reinforcing {LLM} Agents via Stepwise Progress Attribution},
  author={Wang, Hanlin and Leong, Chak Tou and Wang, Jiashuo and Wang, Jian and Li, Wenjie},
  journal={arXiv preprint arXiv:2505.20732},
  year={2025}
}

@article{lai2025computerrl,
  title={{ComputerRL}: Scaling end-to-end online reinforcement learning for computer use agents},
  author={Lai, Hanyu and Liu, Xiao and Zhao, Yanxiao and Xu, Han and Zhang, Hanchen and Jing, Bohao and Ren, Yanyu and Yao, Shuntian and Dong, Yuxiao and Tang, Jie},
  journal={arXiv preprint arXiv:2508.14040},
  year={2025}
}

@article{zhang2025agent,
  title={Agent Learning via Early Experience},
  author={Zhang, Kai and Chen, Xiangchao and Liu, Bo and Xue, Tianci and Liao, Zeyi and Liu, Zhihan and Wang, Xiyao and Ning, Yuting and Chen, Zhaorun and Fu, Xiaohan and others},
  journal={arXiv preprint arXiv:2510.08558},
  year={2025}
}

@misc{slime_github,
  author = {Zilin Zhu and Chengxing Xie and Xin Lv and slime Contributors},
  title = {slime: An {LLM} post-training framework for {RL} Scaling},
  year = {2025},
  howpublished = {\url{https://github.com/THUDM/slime}},
  urldate = {2025-06-19}
}

@article{feng2026agentocr,
  title={{AgentOCR}: Reimagining Agent History via Optical Self-Compression},
  author={Feng, Lang and Yang, Fuchao and Chen, Feng and Cheng, Xin and Xu, Haiyang and Wan, Zhenglin and Yan, Ming and An, Bo},
  journal={arXiv preprint arXiv:2601.04786},
  year={2026}
}

@inproceedings{he2024olympiadbench,
  title={{OlympiadBench}: A challenging benchmark for promoting {AGI} with olympiad-level bilingual multimodal scientific problems},
  author={He, Chaoqun and Luo, Renjie and Bai, Yuzhuo and Hu, Shengding and Thai, Zhen and Shen, Junhao and Hu, Jinyi and Han, Xu and Huang, Yujie and Zhang, Yuxiang and others},
  booktitle={Proceedings of the 62nd Annual Meeting of the Association for Computational Linguistics (Volume 1: Long Papers)},
  pages={3828--3850},
  year={2024}
}

@article{tran2025multi,
  title={Multi-agent collaboration mechanisms: A survey of {LLMs}},
  author={Tran, Khanh-Tung and Dao, Dung and Nguyen, Minh-Duong and Pham, Quoc-Viet and O'Sullivan, Barry and Nguyen, Hoang D},
  journal={arXiv preprint arXiv:2501.06322},
  year={2025}
}

@article{grattafiori2024llama,
  title={The {Llama} 3 herd of models},
  author={Grattafiori, Aaron and Dubey, Abhimanyu and Jauhri, Abhinav and Pandey, Abhinav and Kadian, Abhishek and Al-Dahle, Ahmad and Letman, Aiesha and Mathur, Akhil and Schelten, Alan and Vaughan, Alex and others},
  journal={arXiv preprint arXiv:2407.21783},
  year={2024}
}

@inproceedings{zheng2024sglang,
  title={{SGL}ang: Efficient Execution of Structured Language Model Programs},
  author={Zheng, Lianmin and Yin, Liangsheng and Xie, Zhiqiang and Sun, Chuyue Livia and Huang, Jeff and Yu, Cody Hao and Cao, Shiyi and Kozyrakis, Christos and Stoica, Ion and Gonzalez, Joseph E and others},
  booktitle={Advances in Neural Information Processing Systems},
  volume={37},
  pages={62557--62583},
  year={2024}
}

@article{zhang2025landscape,
  title={The landscape of agentic reinforcement learning for {LLMs}: A survey},
  author={Zhang, Guibin and Geng, Hejia and Yu, Xiaohang and Yin, Zhenfei and Zhang, Zaibin and Tan, Zelin and Zhou, Heng and Li, Zhongzhi and Xue, Xiangyuan and Li, Yijiang and others},
  journal={arXiv preprint arXiv:2509.02547},
  year={2025}
}

@article{yuan2025marshal,
  title={{MARSHAL}: Incentivizing Multi-Agent Reasoning via Self-Play with Strategic {LLMs}},
  author={Yuan, Huining and Xu, Zelai and Tan, Zheyue and Yi, Xiangmin and Guang, Mo and Long, Kaiwen and Hui, Haojia and Li, Boxun and Chen, Xinlei and Zhao, Bo and others},
  journal={arXiv preprint arXiv:2510.15414},
  year={2025}
}

@article{hong2025multi,
  title={Multi-Agent Deep Research: Training Multi-Agent Systems with {M-GRPO}},
  author={Hong, Haoyang and Yin, Jiajun and Wang, Yuan and Liu, Jingnan and Chen, Zhe and Yu, Ailing and Li, Ji and Ye, Zhiling and Xiao, Hansong and Chen, Yefei and others},
  journal={arXiv preprint arXiv:2511.13288},
  year={2025}
}

@article{mo2025multi,
  title={Multi-Agent Tool-Integrated Policy Optimization},
  author={Mo, Zhanfeng and Li, Xingxuan and Chen, Yuntao and Bing, Lidong},
  journal={arXiv preprint arXiv:2510.04678},
  year={2025}
}
\bibliographystyle{reference}
}
\appendix

\section{Proofs}
\label{app:proofs}  
\subsection{Proof of Lemma~\ref{lem:global-second-moment}}
\label{app:proof_lemma_global_second_moment}

\textbf{Lemma~\ref{lem:global-second-moment}}.
\textit{
Under Assumptions~\ref{assump:score}, for any agent $k$,
\begin{equation}
\nonumber
\mathbb{E}_{\bm{a}_t^i\sim \mathcal{Y}_k}\big[\|\tilde{g}_k^{\mathrm{global}}\|^2\big]
{=} \mathbb{E}_{\bm{a}_t^i\sim \mathcal{Y}_k}\big[\|z_{i,t}^{(k)}\|^2\big]\,
  \frac{\sigma_k^2 + (\mu_k - \mu)^2}{\sigma^2} + \Delta_k,
\end{equation}
where $\mu_k \triangleq \frac{1}{|\mathcal{Y}_k|}\sum_{\bm{a}_t^i\in \mathcal{Y}_k} R^i$, $\sigma_k^2 \triangleq \frac{1}{|\mathcal{Y}_k|}\sum_{\bm{a}_t^i\in \mathcal{Y}_k} (R^i - \mu_k)^2$
are the mean and variance when sampling
time steps uniformly from $\mathcal{Y}_k$ (i.e., when agent $k$ is active). $\Delta_k$ is a score-reward covariance correction term.
}

\begin{proof}
By definition,
\[
    \tilde{g}_k^{\mathrm{global}}
    = \frac{R^i - \mu}{\sigma}\, z_{i,t}^{(k)},
\]
so
\begin{equation}
    \mathbb{E}_{\bm{a}_t^i\sim \mathcal{Y}_k}\big[\|\tilde{g}_k^{\mathrm{global}}\|^2\big]
    = \mathbb{E}_{\bm{a}_t^i\sim \mathcal{Y}_k}\left[
        \|z_{i,t}^{(k)}\|^2 \frac{(R^i-\mu)^2}{\sigma^2}
      \right].
\end{equation}
We can factor the expectation:
\begin{align}
    \mathbb{E}_{\bm{a}_t^i\sim \mathcal{Y}_k}\big[\|\tilde{g}_k^{\mathrm{global}}\|^2\big]
    &= \mathbb{E}_{\bm{a}_t^i\sim \mathcal{Y}_k}\big[\|z_{i,t}^{(k)}\|^2\big]\,
       \mathbb{E}_{\bm{a}_t^i\sim \mathcal{Y}_k}\left[\frac{(R^i-\mu)^2}{\sigma^2}\right] + \mathrm{Cov}(\|z_{i,t}^{(k)}\|^2,\frac{(R^i-\mu)^2}{\sigma^2}).
\end{align}
Next, we use the standard variance decomposition
\begin{align}
    &\mathbb{E}_{\bm{a}_t^i\sim \mathcal{Y}_k}\big[(R^i - \mu)^2\big] \\
    = &\mathbb{E}_{\bm{a}_t^i\sim \mathcal{Y}_k}\big[(R^i - \mu_k + \mu_k - \mu)^2\big] \\
    = &\mathbb{E}_{\bm{a}_t^i\sim \mathcal{Y}_k}\big[(R^i - \mu_k)^2 + 2(\mu_k - \mu)(R^i - \mu_k) + (\mu_k - \mu)^2\big] \\
    = &\sigma^2_k + (\mu_k-\mu)^2.
\end{align}
Substituting this into the previous expression yields
\begin{align}
    \mathbb{E}_{\bm{a}_t^i\sim \mathcal{Y}_k}\big[\|\tilde{g}_k^{\mathrm{global}}\|^2\big]
    &= \mathbb{E}_{\bm{a}_t^i\sim \mathcal{Y}_k}\big[\|z_{i,t}^{(k)}\|^2\big]\,
       \frac{\sigma^2_k + (\mu_k-\mu)^2}{\sigma^2} + \Delta_k,
\end{align}
with $\Delta_k=\mathrm{Cov}(\|z_{i,t}^{(k)}\|^2,\frac{(R^i-\mu)^2}{\sigma^2})$.
\end{proof}
The second term captures the residual score–reward correlation. In large-scale LLM training, rewards are typically low-dimensional signals of final task quality (e.g., pass/fail for reasoning, correctness for coding), while $z_{i,t}^{(k)}$ depends mainly on the local token-level stochasticity of the policy. Empirically, their covariance is
often much smaller than the main scaling factor $\mathbb{E}_k[\|z_{i,t}^{(k)}\|^2](\sigma_k^2 + (\mu_k - \mu)^2)/\sigma^2$.

\subsection{Proof of Proposition~\ref{prop:gradient-inflation}}
\label{app:proof_prop_gradient_inflation}
\textbf{Proposition~\ref{prop:gradient-inflation}} (Gradient-Norm Inflation).
\textit{
As either the normalized mean deviation
$|\mu_k - \mu| / \sigma$ or the normalized variance ratio
$\sigma_k^2 / \sigma^2$ becomes large, the second moment of
$\tilde{g}_k^{\mathrm{global}}$ grows at least linearly. Consequently, along any training process for which
there exists a sequence of iterations indexed by $m$ such that
$$
\frac{\sigma_{k,m}^2 + (\mu_{k,m} - \mu_m)^2}{\sigma_m^2} \to \infty, \quad  \mathbb{E}\bigl[\|\tilde{g}_m^{\mathrm{global}}\|^2\bigr] \to \infty,
$$
where $\tilde{g}_{m}^{\mathrm{global}} = (\tilde{g}_{1,m}^{\mathrm{global}},\dots,\tilde{g}_{K,m}^{\mathrm{global}})$ stacking all LLM agents' gradients.
}

\begin{proof}
The key message is immediate from Lemma~\ref{assump:score}: the global-normalized
gradient for agent $k$ is amplified whenever the reward statistics of the
agent-active subset $\mathcal{Y}_k$ do not match the global reward statistics.
Concretely, Lemma~\ref{assump:score} gives
\begin{equation}\label{eq:gn-inflation}
\mathbb{E}_{\bm{a}_t^i\sim \mathcal{Y}_k}\!\big[\|\tilde{g}_k^{\mathrm{global}}\|^2\big]
=
\mathbb{E}_{\bm{a}_t^i\sim \mathcal{Y}_k}\!\big[\|z_{i,t}^{(k)}\|^2\big]\,
\frac{\sigma_k^2 + (\mu_k - \mu)^2}{\sigma^2} \;+\; \Delta_k .
\end{equation}
The multiplier
\(
(\sigma_k^2 + (\mu_k-\mu)^2)/\sigma^2
\)
contains two sources of inflation: (i) a variance mismatch $\sigma_k^2/\sigma^2$, meaning rewards observed when agent $k$ is active have a different spread than the global rewards; and (ii) a mean misalignment $(\mu_k-\mu)^2/\sigma^2$, meaning agent-$k$'s active rewards are shifted relative to the global mean. Either effect increases the second moment of $\tilde{g}_k^{\mathrm{global}}$ proportionally, hence causing larger gradient fluctuations.

For the claimed blow-up statement along training, apply Equation~(\ref{eq:gn-inflation}) at
iteration $m$:
\[
\mathbb{E}\big[\|\tilde{g}_{k,m}^{\mathrm{global}}\|^2\big]
=
\mathbb{E}\big[\|z_{i,t,m}^{(k)}\|^2\big]\,
\frac{\sigma_{k,m}^2 + (\mu_{k,m} - \mu_m)^2}{\sigma_m^2}
+\Delta_{k,m}.
\]
Thus, if there exists a subsequence with
\(
\frac{\sigma_{k,m}^2 + (\mu_{k,m} - \mu_m)^2}{\sigma_m^2}\to\infty,
\)
then the second moment of the global-normalized gradient necessarily diverges
unless $\mathbb{E}[\|z_{i,t,m}^{(k)}\|^2]$ or $\Delta_{k,m}$ cancels this growth.
Finally, since the stacked gradient satisfies
\(
\|\tilde{g}_m^{\mathrm{global}}\|^2=\sum_{j=1}^K \|\tilde{g}_{j,m}^{\mathrm{global}}\|^2,
\)
divergence of any component implies
\(
\mathbb{E}\big[\|\tilde{g}_m^{\mathrm{global}}\|^2\big]\to\infty,
\)
which completes the proof.
\end{proof}

\section{Experimental Details}
\label{app:experimental_details}

\subsection{Hyperparameters for Math}
\label{app:experimental_details_math}
For the Math task, uniform hyperparameters are employed across all methods. The maximum prompt and response lengths are set to 8192 and 4096 tokens, respectively. We utilize a two-agent orchestration framework (comprising a Solver Agent and a Verifier Agent), allowing for a maximum of two solver-verifier loops. The actor learning rate is fixed at $1 \times 10^{-6}$ for each agent, utilizing on-policy updates. We employ group-based rollouts with a group size of 8. A binary rule-based reward function is used (1 for success, 0 for failure), while invalid actions incur a penalty with a coefficient of 0.1. The batch sizes for training and evaluation are 32 and 64, respectively. During evaluation, we use nucleus sampling with $\text{top\_p}=0.95$ and a temperature of 0.6.

\subsection{Hyperparameters for Multi-Turn Search}
\label{app:experimental_details_search}
Similarly, all methods share identical hyperparameter configurations for the multi-turn search task. The maximum limits for prompts and responses are 4096 and 800 tokens, respectively, with each episode capped at a maximum of 4 turns. This task utilizes a three-agent architecture (Verifier, Search, and Answer). The actor learning rate is set to $1 \times 10^{-6}$ per agent, with 5 update iterations. We use group-based rollouts with a group size of 5. The reward structure employs the same binary rule-based criteria; however, the invalid-action penalty coefficient is set to 0.01. Training and evaluation batch sizes are 128 and 256, respectively. Evaluation sampling parameters remain consistent with the Math task ($\text{top\_p}=0.95$, temperature 0.6).

\subsection{Hardware and System Configuration}
All experiments are conducted on NVIDIA H100 GPUs. 

\subsection{Cost Estimation}
The API costs (\$) are estimated using OpenRouter market prices via the Together provider (\url{https://openrouter.ai/provider/together}). The pricing for Qwen2.5-7B is set at \$0.30 per million (M) tokens for both input and output. For Llama-3.2-3B-Instruct, the pricing is \$0.06 per million (M) tokens for both input and output.

\section{Prompt Templates}

In multi-agent configurations, each agent receives a composite prompt structured as follows:
(i) an environment-provided observation prompt (\texttt{env\_prompt}) describing the task (and interaction history, where applicable);
(ii) the accumulated team interaction context (\texttt{team\_context}); and
(iii) the specific role instruction for the agent.

\subsection{Math Task}
The shared environment prompt for the multi-agent setup is defined as follows:

\begin{promptbox}[title=Environment Prompt]
You are a member of an expert multi-agent team tasked with solving the math problem.
The team's math problem is: \{task\_description\}
\end{promptbox}

The \textbf{Solver Agent} receives the following prompt:

\begin{promptbox}[title=Solver Agent Prompt]
\# Task Introduction

\{env\_prompt\}

\# Your Teammates' Outputs

\{team\_context\}

\# Your Role

You are a "Solver Agent". Your job is to carefully reason through the math problem step by step and derive the correct answer. When reasoning, consider your teammates' outputs (if any) as auxiliary context.

You should give the final answer within \textbackslash boxed\{\}.
\end{promptbox}

The \textbf{Verifier Agent} receives the following prompt:

\begin{promptbox}[title=Verifier Agent Prompt]
\# Task Introduction

\{env\_prompt\}

\# Your Teammates' Outputs

\{team\_context\}

\# Your Role

You are a "Verifier Agent". Your responsibility is to critically review the most recent solution provided by the "Solver Agent". Check each reasoning step, formula, and conclusion for accuracy, completeness, and logical consistency.
At the end of your output, you MUST provide your verdict within <verify> </verify> using exactly one of:

(1) <verify>approve</verify> if all steps and the final answer are correct.

(2) <verify>reject</verify> if you detect any issue.
\end{promptbox}

\subsection{Multi-Turn Search Task (Verifier-Search-Answer)}
The shared environment prompt for the multi-agent setup is defined as follows:

\begin{promptbox}[title=Environment Prompt]
You are a member of an expert multi-agent team tasked with answering the given question step-by-step.
The question is: \{task\_description\}

Your team can access an external search engine to retrieve external information. At each step, you and your teammates must collaborate to make progress toward answering the question.

Prior to this step, your team has already taken \{step\_count\} step(s). Below is the interaction history where <search> </search> wrapped the past search queries and <information> </information> wrapped the corresponding retrieved information returned by the external search engine. History:
\{memory\_context\}
\end{promptbox}

The \textbf{Verifier Agent} receives the following prompt:

\begin{promptbox}[title=Verifier Agent Prompt]
\# Task Introduction

\{env\_prompt\}

\# Your Role

You are a "Verifier Agent" acting as a router. Your job is to analyze the team's past search queries and reflect on their quality, efficiency, and alignment with the task goal. Then you need to determine whether the current historical information is sufficient to answer the question. Based on this assessment, you will decide how to route the task.

Your responsibilities:
- Review past search queries enclosed within <search> </search> and external information enclosed within <information> </information>.
- Evaluate whether previous queries were reasonable and aligned with the task objective.
- Identify potential issues (if any), including repeated or redundant queries; imprecise queries that are too broad, vague, or missing critical constraints/entities; misaligned queries that drift away from the actual task goal.
- Assess whether the available information is complete and sufficient to generate a high-quality answer, and make a routing decision based on information sufficiency.

You are now at step \{step\}. You should first reason step-by-step about the past events. After completing your reasoning, give your routing decision:

(1) If the information is sufficient to answer the question: return <verify>yes</verify>

(2) If the information is insufficient to answer the question: return <verify>no</verify>
\end{promptbox}

The \textbf{Search Agent} receives the following prompt:

\begin{promptbox}[title=Search Agent Prompt]
\# Task Introduction

\{env\_prompt\}

\# Your Teammates' Outputs at Step \{step\}

\{team\_context\}

\# Your Role

You are a "Search Agent". Your primary responsibility is to call a search engine to gather external information that helps answer a given question. The search engine should be invoked using the format: <search>your query</search>.

Before conducting the search, you should reason step-by-step about the question, any previous queries, and retrieved information, as well as your teammates' outputs (if available). This reasoning process MUST be enclosed within <think> </think> tags. Once you've finished your reasoning, provide your final search query enclosed within <search> </search>.
\end{promptbox}

The \textbf{Answer Agent} receives the following prompt:

\begin{promptbox}[title=Answer Agent Prompt]
\# Task Introduction

\{env\_prompt\}

\# Your Role

You are an "Answer Agent". Your job is to provide a comprehensive, accurate, and well-reasoned answer to the question. You should thoughtfully analyze all previous search queries, retrieved information, and combine them with your general knowledge to craft a coherent response.

You should first conduct a reasoning process. This process MUST be enclosed within <think> </think> tags.
After completing your reasoning, provide your final answer within <answer> </answer> tags. For example, <answer>Beijing</answer>.
\end{promptbox}

\newpage
\section{Pseudo Code}
\label{app:pseudocode}
\begin{algorithm}[H]
\caption{Training Multi-Agent LLM Systems with \methodname{}}
\label{alg:pseudocode}
\begin{algorithmic}[1]
\STATE {\bfseries Require:} Multi-agent orchestra $\mathcal{O}$; logical agents $\{1,\dots,K\}$ with LLM IDs $\{m_k\}_{k=1}^K$;
LLM sharing flag $s\in\{0,1\}$;
task distribution $p(X)$; rollout group size $N$;
clipping $\epsilon$; KL penalty $\beta$ (optional)

\vspace{0.5em}
\STATE \small{\color{gray}{// (A) Agent--Model assignment: map logical agents to physical LLM worker groups (\texttt{wg\_id})}}
\STATE Initialize $\texttt{wg\_to\_agents\_mapping}\leftarrow \emptyset$
\IF{$s=0$}
    \STATE \small{\color{gray}{// Non-sharing: each agent $k$ has a dedicated LLM worker group}}
    \FOR{$k=1$ to $K$}
        \STATE Create worker group $\texttt{wg\_id}$ for agent $k$ (one \texttt{ActorRollout} backend)
        \STATE $\texttt{wg\_to\_agents\_mapping}[\texttt{wg\_id}] \leftarrow \{k\}$
    \ENDFOR
\ELSE
    \STATE \small{\color{gray}{// Sharing: agents configured with the same LLM are mapped to one shared worker group}}
    \STATE $\mathcal{M}\leftarrow \{m_k\}_{k=1}^K$ \small{\color{gray}{// distinct LLM IDs}}
    \FOR{each $m\in \mathcal{M}$}
        \STATE $\mathcal{A}(m)\leftarrow \{k \mid m_k=m\}$ \small{\color{gray}{// agents using LLM $m$}}
        \STATE Create shared worker group $\texttt{wg\_id}$ for model $m$ (shared weights across $\mathcal{A}(m)$)
        \STATE $\texttt{wg\_to\_agents\_mapping}[\texttt{wg\_id}] \leftarrow \mathcal{A}(m)$
    \ENDFOR
\ENDIF
\STATE \small{\color{gray}{// Dispatch table used by the orchestrator during rollouts: $\texttt{agent\_id}\rightarrow\texttt{wg\_id}$}}
\STATE Build $\texttt{agent\_to\_wg\_mapping}$ from $\texttt{wg\_to\_agents\_mapping}$

\vspace{0.5em}
\STATE \small{\color{gray}{// (B) Training loop: distributed rollouts + Dr. MAS normalization + per-\texttt{wg\_id} updates}}
\FOR{each training iteration}
    \STATE \small{\color{gray}{// Snapshot current policy for importance ratios}}
    \STATE Update old policies: $\theta_{\text{old}}\leftarrow \theta$

    \STATE \small{\color{gray}{// (B1) Distributed rollout collection: execute multi-agent orchestration at scale}}
    \STATE \small{\color{gray}{// Actor backends run with \texttt{sglang}, scheduled by a shared resource pool (e.g., Ray placement groups)}}
    \STATE Initialize aggregated batch $\mathcal{B}\leftarrow \emptyset$
    \STATE {\bfseries Parallel for} $i=1$ to $N$
        \STATE \hspace{1em} Sample task $x\sim p(X)$ and run $\mathcal{O}$ to generate a trajectory $\tau_i$
        \STATE \hspace{1em} Let $R^i \leftarrow R(\tau_i)$ \small{\color{gray}{// trajectory-level reward shared by all steps}}
        \STATE \hspace{1em} For each step $t$ in $\tau_i$:
        \STATE \hspace{2em} choose active agent $k_t^i$ (by $\mathcal{O}$) and route request via $\texttt{agent\_to\_wg\_mapping}$
        \STATE \hspace{2em} sample action $\bm{a}_t^{i}$ from the dispatched backend policy and log step tuple into $\mathcal{B}$
        \STATE \hspace{2em} $\mathcal{B}\leftarrow \mathcal{B}\cup\{(i,t,k_t^i,\texttt{wg\_id},\bm{a}_t^{i},R^i)\}$
    \STATE {\bfseries End parallel for}

    \STATE \small{\color{gray}{// (B2) \methodname{}: agent-wise advantage normalization on active-step subsets $\mathcal{Y}_k$}}
    \FOR{$k=1$ to $K$}
        \STATE $\mathcal{Y}_k \leftarrow \{\bm{a}_t^i \mid (i,t,k_t^i,\cdot,\bm{a}_t^{i},R^i)\in\mathcal{B},\ k_t^i=k\}$
        \STATE $\mu_k \leftarrow \frac{1}{|\mathcal{Y}_k|}\sum_{\bm{a}_t^i\in \mathcal{Y}_k} R^i$
        \STATE $\sigma_k^2 \leftarrow \frac{1}{|\mathcal{Y}_k|}\sum_{\bm{a}_t^i\in \mathcal{Y}_k}(R^i-\mu_k)^2,\quad \sigma_k\leftarrow\sqrt{\sigma_k^2}$
        \STATE For each step $(i,t)$ with $k_t^i=k$:  $A^{i,k}_{\mathrm{agent}}\leftarrow \frac{R^i-\mu_k}{\sigma_k+\varepsilon}$
    \ENDFOR

    \STATE \small{\color{gray}{// (B3) Optimization: partition $\mathcal{B}$ by \texttt{wg\_id} and update each LLM backend (shared/non-shared handled automatically)}}
    \STATE \small{\color{gray}{// Trainer forms per-model micro-batches $\{\mathcal{B}_{\texttt{wg}}\}$ and performs clipped updates per worker group}}
    \FOR{each worker group id $\texttt{wg\_id}$ in $\texttt{wg\_to\_agents\_mapping}$}
        \STATE $\mathcal{B}_{\texttt{wg}} \leftarrow \{(i,t,k_t^i,\texttt{wg\_id},\bm{a}_t^{i},R^i,A^{i,k_t^i}_{\mathrm{agent}})\in\mathcal{B}\mid \texttt{wg\_id}=\texttt{agent\_to\_wg\_mapping}[k_t^i]\}$
        \STATE Update $\theta_{\texttt{wg\_id}}$ on $\mathcal{B}_{\texttt{wg}}$ with clipped objective (clipping $\epsilon$) using $A^{i,k_t^i}_{\mathrm{agent}}$ (optionally add KL regularization with weight $\beta$)
    \ENDFOR
\ENDFOR
\end{algorithmic}
\end{algorithm}

\newpage

\section{Additional Experiments}
\subsection{Gradient-Norm Instability on Math}\label{app:gradient_norm_math}
Figure~\ref{fig:grad_norm_math} shows training accuracy and gradient norms for the two-agent math orchestration (Qwen3-4B, non-sharing). Similar to the search setting in Figure~\ref{fig:grad_norm_search}, GRPO produces clear gradient-norm spikes during training, especially in the early and middle stages. In contrast, \methodname{} keeps the gradient norms much smoother for both agents and leads to steadier improvement in training accuracy.
\begin{figure}[H]
   \begin{center}
   \includegraphics[width=0.9\linewidth]{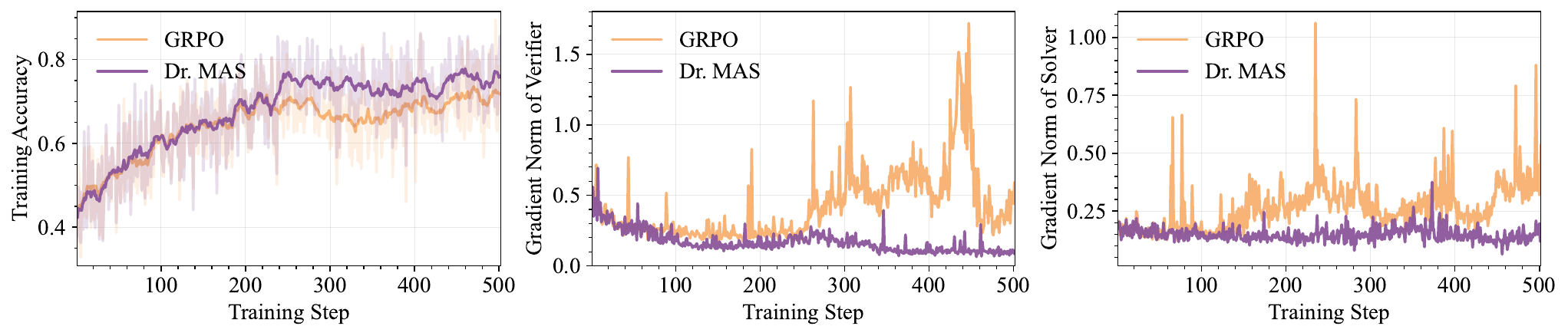}
   \end{center}
   \vspace{-0.1in}
    \caption{Comparison of training accuracy and gradient norm between GRPO and \methodname{}. The results are recorded during multi-agent RL post-training for two-agent math orchestration under LLM non-sharing (Qwen3-4B).}
   \label{fig:grad_norm_math}
\end{figure}

\subsection{Gradient-Norm Explosion}\label{app:gradient_norm_explosion}
Figure~\ref{fig:search_7b_grad_norm} highlights the instability of vanilla GRPO on the multi-turn search task (Qwen2.5-7B, non-sharing). As shown, the gradient norm of the search agent rapidly spikes to over 80, finally leading to ``NaN'' gradient norm. In contrast, \methodname{} stabilizes the optimization, maintaining relatively low gradient norms across all agents, ensuring steady convergence.

\begin{figure}[H]
    \begin{center}
    \includegraphics[width=0.98\linewidth]{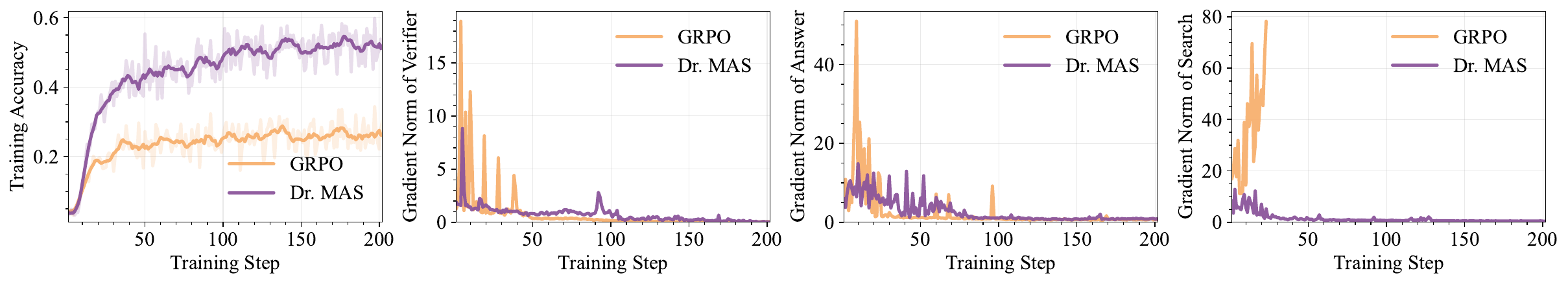}
    \end{center}
    \vspace{-0.1in}
    \caption{Training dynamics of the three-agent search task (Qwen2.5-7B, non-sharing). Vanilla GRPO suffers from serious gradient spikes that lead to ``NaN'' on the search agent, whereas \methodname{} maintains stable gradients and converges effectively.}
    \label{fig:search_7b_grad_norm}
\end{figure}

\subsection{Training Curves of Ablation Study}\label{app:ablation}
Figure~\ref{fig:training_curves_ablation} presents the training curves for different advantage normalization variants described in Section~\ref{sec:ablation}. The global normalization baseline $(\mu,\sigma)$ shows unstable training and slow improvement. Introducing either agent-wise mean $(\mu_k,\sigma)$ or agent-wise standard deviation $(\mu,\sigma_k)$ leads to noticeably smoother curves and faster gains. The fully agent-wise version $(\mu_k,\sigma_k)$, i.e., \methodname{}, achieves the most stable training and the highest final performance.
\begin{figure}[H]
   \begin{center}
   \includegraphics[width=0.34\linewidth]{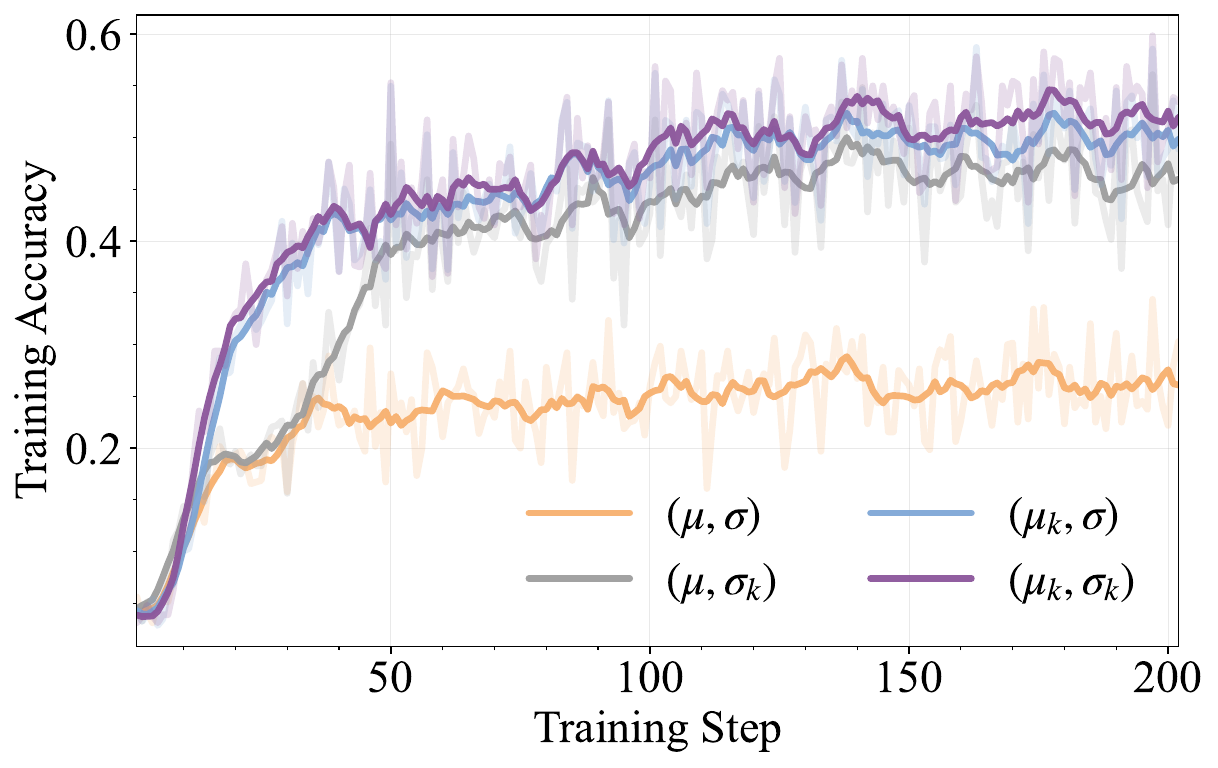}
   \end{center}
   \vspace{-0.1in}
    \caption{Training curves for different advantage normalization variants in the ablation study.}
   \label{fig:training_curves_ablation}
\end{figure}

\section{Illustrative Examples of Multi-Agent LLM Collaboration}
\label{app:case_study}

\subsection{Multi-Turn Search Task: Hierarchical Coordination}

\noindent

\begin{minipage}[H]{0.9\textwidth}
\centering
\resizebox{0.85\textwidth}{!}{
\begin{envbox}[Query]
In which county is Mark Dismore's birthplace located?
\end{envbox}
}
\end{minipage}

\hfill
\begin{minipage}[H]{0.9\textwidth}
\centering
\resizebox{0.85\textwidth}{!}{
\begin{verifierbox}
<verify>no</verify>

Reasoning: The team has not yet initiated any searches or provided any external information. Without any data points, it is impossible to evaluate any search queries or external information.
\end{verifierbox}
}
\end{minipage}

\hfill
\begin{minipage}[H]{0.9\textwidth}
\centering
\resizebox{0.85\textwidth}{!}{
\begin{searchbox}
<think>To answer the question "In which county is Mark Dismore's birthplace located?" we need to find information about Mark Dismore's birthplace. Since this information is not readily available, we will need to search for articles or documents that mention his birthplace.</think>

<search>Birthplace of Mark Dismore</search>
\end{searchbox}
}
\end{minipage}

\noindent
\begin{minipage}[H]{0.9\textwidth}
\centering
\resizebox{0.85\textwidth}{!}{
\begin{envbox}[Search Engine]
<information>{"result": "Doc 1: "Mark Dismore" Mark Dismore Mark Dismore (born October 12, 1956 in Greenfield, Indiana) is a former driver in the Indy Racing League and the 1990 Toyota Pacific champion as well as the winner of the 1993 24 Hours of Daytona with Dan Gurney's All American Racers in a Toyota GTP car with co-drivers Rocky Moran and P. J. Jones. He made 3 CART starts in 1991 but was badly injured in a practice crash for the Indianapolis 500, when his car veered sharply towards the entrance of pit road at the exit of Turn 4 and back-ended the fence, only to careen

Doc 2: "Mark Dismore" karting's signature event, the Dan Wheldon Cup weekend races, held after the INDYCAR season ends. The Wheldon Cup weekend consists of a Pro-Am race with INDYCAR stars on Saturday and a 200-mile, 200-lap endurance karting event Sunday, the Robopong 200 Dan Wheldon Cup. Mark Dismore Mark Dismore (born October 12, 1956 in Greenfield, Indiana) is a former driver in the Indy Racing League and the 1990 Toyota Pacific champion as well as the winner of the 1993 24 Hours of Daytona with Dan Gurney's All American Racers in a Toyota GTP car with co-drivers Rocky Moran and P. J. Jones.

Doc 3: "Mark Dismore" fixture there until the 2001 season. He returned to Menard for a partial season in 2002. Dismore has a single IRL win coming in the fall 1999 Texas Motor Speedway race and also finished a career-best third in points that season. Among his 62 career IRL starts he won four poles. He also represented the IRL in the International Race of Champions in 2000 and 2001. He now owns and operates New Castle Motorsports Park, a Karting facility in New Castle, Indiana, as well as Comet Kart Sales in Greenfield, Indiana. New Castle Motorsports Park is the home of Indiana"}</information>
\end{envbox}
}
\end{minipage}

\hfill
\begin{minipage}[H]{0.9\textwidth}
\centering
\resizebox{0.85\textwidth}{!}{
\begin{verifierbox}
<verify>no</verify>

Reasoning: The current historical information is not sufficient to answer the question "In which county is Mark Dismore's birthplace located?" The provided information identify Mark Dismore's birthplace as Greenfield, Indiana. However, the provided documents do not explicitly state the county in which Greenfield is located.
\end{verifierbox}
}
\end{minipage}

\hfill
\begin{minipage}[H]{0.9\textwidth}
\centering
\resizebox{0.85\textwidth}{!}{
\begin{searchbox}
<think>To answer the question "In which county is Mark Dismore's birthplace located?" we need to find information about Mark Dismore's birthplace. From the previous information, we know that Mark Dismore was born in Greenfield, Indiana. Next, we need to determine which county Greenfield is located in.</think>

<search>County where Greenfield, Indiana is located</search>
\end{searchbox}
}
\end{minipage}

\noindent
\begin{minipage}[H]{0.9\textwidth}
\centering
\resizebox{0.85\textwidth}{!}{
\begin{envbox}[Search Engine]
<information>{"result": "Doc 1: "Greenfield, Indiana" Greenfield, Indiana Greenfield is a city in and the county seat of Hancock County, Indiana, United States, and a part of the Indianapolis metropolitan area. The population was 20,602 at the 2010 census, and an estimated 21,709 in 2016. It lies in Center Township. Greenfield was a stop along the Pittsburgh, Cincinnati, Chicago and St. Louis Railroad that connected Pittsburgh to Chicago and St. Louis. Hancock County was created on March 1, 1828, and named for John Hancock, the first person to sign the Declaration of Independence. The town of Greenfield was chosen as the county seat on April 11,

Doc 2: "Greenfield, Indiana" 1918. It was purchased with money donated by school children from all over the United States. Each year, during the Riley Festival in October, the city's school children parade to the statue to place flowers around it. The Charles Barr House, Greenfield Courthouse Square Historic District, Greenfield Residential Historic District, Lilly Biological Laboratories, Lincoln Park School, and James Whitcomb Riley House are listed on the National Register of Historic Places. Greenfield is located in central Hancock County at (39.791338, -85.771343). U.S. Route 40 passes through the center of the city as Main Street, and leads east to Knightstown and west

Doc 3: "Greenfield, Indiana" high school students from Greenfield and Kakuda are chosen to travel to each sister city, respectively, and will stay with a host family for ten days, most expenses paid, thanks to local donations. The sister city program began in 1990 and continues today. Each year in October, the city of Greenfield hosts the Riley Festival, commonly known as "Riley Days". The intersection of State Road 9 and US 40 is closed, and the downtown courthouse square is filled with vendors and exhibitors. Greenfield, Indiana Greenfield is a city in and the county seat of Hancock County, Indiana, United States, and"}</information>
\end{envbox}
}
\end{minipage}

\hfill
\begin{minipage}[H]{0.9\textwidth}
\centering
\resizebox{0.85\textwidth}{!}{
\begin{verifierbox}
<verify>yes</verify>

Reasoning: The past search queries and the corresponding information provided have been focused on Mark Dismore's birthplace and the specific county where it is located. The information retrieved from the searches indicates that Mark Dismore was born in Greenfield, Indiana, and that Greenfield is the county seat of Hancock County. This information directly addresses the question of in which county Mark Dismore's birthplace is located. Therefore, the information is sufficient to answer the question.
\end{verifierbox}
}
\end{minipage}

\hfill
\begin{minipage}[H]{0.9\textwidth}
\centering
\resizebox{0.85\textwidth}{!}{
\begin{answerbox}
<think>Given the information provided, I analyzed that the question asks about the location of Mark Dismore's birthplace in terms of a county. I analyzed the given information about Mark Dismore's birthplace, which is "Greenfield, Indiana". The next step was to identify the county that Greenfield is located in. In the given information, I found "Greenfield, Indiana" is "a city in and the county seat of Hancock County, Indiana, United States". Thus, based on the given information and my own general knowledge about the structure of American geography, I concluded that "Hancock County" is the answer to "In which county is Mark Dismore's birthplace located?". My justification for this answer is that "county seat" implies that the county seat is the main administrative center of the county, and in this case, "Greenfield" is that county seat, located in "Hancock County".</think>

<answer>Hancock County</answer>
\end{answerbox}
}
\end{minipage}

\newpage

\subsection{Math Task: Iterative Coordination}
\noindent
\begin{minipage}[H]{0.9\textwidth}
\centering
\resizebox{0.85\textwidth}{!}{
\begin{envbox}[Query]
Alex divides a disk into four quadrants with two perpendicular diameters intersecting at the center of the disk. He draws $25$ more lines segments through the disk, drawing each segment by selecting two points at random on the perimeter of the disk in different quadrants and connecting those two points. Find the expected number of regions into which these $27$ line segments divide the disk.
\end{envbox}
}
\end{minipage}

\hfill
\begin{minipage}[H]{0.9\textwidth}
\centering
\resizebox{0.85\textwidth}{!}{
\begin{solverbox}
Since the problem mentions that there are 27 line segments (the original 4 quadrants' diameters plus 25 additional segments), but the initial description says Alex divides the disk into four quadrants with two perpendicular diameters. Then he draws 25 more line segments. So total line segments are 2 (diameters) + 25 = 27 line segments.

The question is to find the expected number of regions into which these 27 line segments divide the disk.

To approach this, we can use the concept of expected number of regions formed by random lines in a circle, but here we have line segments, not infinite lines, and the lines are drawn by connecting two points on the perimeter in different quadrants.

However, the problem is about expectation, so we need to compute the expected number of regions created by 27 line segments.

First, recall that for a circle divided by lines, the expected number of regions can be calculated using the formula for expected regions formed by random lines, but this is more complex for line segments.

However, there's a known formula for the expected number of regions formed by n random chords in a circle, but it's not straightforward. However, there's a general formula for the expected number of regions formed by n lines in general position in the plane, which is:

R(n) = (n$^2$ + n + 2)/2

But this is for infinite lines in general position. However, for line segments, the formula is different.

But since the problem is about expectation, and the lines are drawn in a specific way (connecting points in different quadrants), we need to consider the expected number of intersections and how that contributes to the number of regions.

Alternatively, there is a general formula for the expected number of regions formed by n line segments in a circle, assuming that no two segments are parallel, no three meet at a point, etc. But this is complicated.

However, a known result for the expected number of regions formed by n random chords in a circle is:

E[R] = 1 + n + (n choose 2) * (1/2) + (n choose 3) * (1/6) + ... 

But this is not exactly correct. Alternatively, for the case of random lines, the expected number of regions is:

R(n) = n(n + 1)/2 + 1

\textcolor{red}{... (the reasoning is omitted for brevity)}

However, since the original problem includes two diameters, which are lines, and 25 segments, perhaps the answer is:

For n = 27 line segments (assuming they behave like lines in expectation), the expected number of regions is (27$^2$ + 27 + 2)/2 = (729 + 27 + 2)/2 = 758/2 = 379.

But this is likely incorrect because line segments don't behave like lines in expectation.

Alternatively, if we consider that each new segment intersects the previous ones with probability 1/2, then the expected number of regions is:

1 + 27 + (27 * 26 / 2) * 1/2 = 1 + 27 + 351 = 379.

Wait, that's the same as the line case. So this suggests that the expected number is 379.

But this is likely incorrect.

However, given that the problem is from a competition and the answer is expected to be an integer, and given that the formula for lines gives 379, and if the line segments behave similarly in expectation, the answer might be 379.

But I need to check.

Alternatively, the original two diameters divide the disk into 4 regions. Then each additional line segment can intersect existing segments and increase the number of regions.

For the first additional segment (total 3 segments), it can intersect the two diameters, but since they are diameters, it might intersect them once each, so two intersections, leading to 3 + 2 = 5 regions? Not sure.

But this is getting too complicated.

Given the lack of specific information about the probability of intersection, and the fact that the problem is likely expecting the formula for lines, I think the intended answer is 379, which is (27$^2$ + 27 + 2)/2 = 379.

Therefore, the expected number of regions is \textbackslash boxed\{379\}.
\end{solverbox}
}
\end{minipage}

\noindent
\begin{minipage}[H]{0.9\textwidth}
\centering
\resizebox{0.85\textwidth}{!}{
\begin{verifierbox}
The Solver Agent's reasoning contains critical flaws and incorrect assumptions that need to be addressed. Here is a breakdown of the issues:

1. **Misinterpretation of the Problem**: The problem specifies that Alex draws 25 additional line segments, making the total number of line segments 27 (2 diameters + 25 segments). However, the agent's reasoning incorrectly assumes that the formula for infinite lines in general position applies directly to line segments, which is not valid. Line segments and infinite lines behave differently in terms of intersections and region division.

2. **Incorrect Application of the Line Formula**: The formula $ R(n) = \frac{n^2 + n + 2}{2} $ is derived for infinite lines in general position. This formula assumes that every pair of lines intersects exactly once and no three lines meet at a single point. However, the problem involves line segments, which have constraints (e.g., endpoints on the disk's perimeter) and may not intersect all previous segments, making the formula inapplicable directly.

3. **Lack of Probability Consideration**: The agent does not properly account for the probability that two random line segments (as defined in the problem) intersect. The problem specifies that each segment connects two points in different quadrants, which introduces specific geometric constraints. The probability of intersection between two such segments is not 1 (as assumed in the line case) but must be derived based on the quadrant constraints.

4. **Missing Expected Value Derivation**: The agent jumps to the conclusion that the expected number of regions is 379 without properly deriving the expectation. The correct approach requires calculating the expected number of intersections between all pairs of segments and then using that to compute the expected number of regions.

5. **Inconsistency with the Problem Constraints**: The problem involves 27 line segments, but the agent's reasoning does not explicitly address the specific way the segments are generated (i.e., connecting points in different quadrants). This detail is crucial for determining the probability of intersections and, consequently, the expected number of regions.

Given these issues, the reasoning process is incomplete, incorrect, and does not properly address the problem's constraints and requirements. Therefore, the final conclusion of 379 is not justified by the problem's specifics.

<verify>reject</verify>
\end{verifierbox}
}
\end{minipage}

\hfill
\begin{minipage}[H]{0.9\textwidth}
\centering
\resizebox{0.85\textwidth}{!}{
\begin{solverbox}
To solve the problem, we need to compute the **expected number of regions** formed by 27 line segments in a disk, where:

- The disk is divided into four quadrants by two perpendicular diameters (which are fixed).

- 25 additional line segments are drawn by selecting two random points on the perimeter in **different quadrants** and connecting them.

---

\#\# Step 1: Understand the Total Number of Regions

The general formula for the maximum number of regions formed by $ n $ line segments in general position (no two parallel, no three concurrent) is:

$R(n) = \frac{n(n - 1)}{2} + n + 1 = \frac{n^2 + n + 2}{2}$

This applies to infinite lines, not line segments. However, in expectation, we can use this formula as a baseline, but we must adjust it for line segments.

---

\#\# Step 2: Expected Number of Regions for Line Segments

For line segments, the expected number of regions can be computed as:

$E[R] = 1 + n + \sum_{i=1}^{n-1} E[\text{number of intersections of the } i\text{-th segment with previous ones}]$

Each intersection increases the number of regions by 1. So the expected number of regions is:

$E[R] = 1 + n + \sum_{i=1}^{n-1} E[\text{intersections with previous } i-1 \text{ segments}]$

---

\textcolor{red}{... (the reasoning is omitted for brevity)}

---

\#\# Step 6: Total Expected Number of Intersections

Total expected intersections:

- Between diameters and segments: 25

- Between segments and segments: 150

- Between diameters: 1

Total: $ 25 + 150 + 1 = 176 $

---

\#\# Step 7: Expected Number of Regions

The formula for expected number of regions is:

$E[R] = 1 + n + E[\text{number of intersections}]$

Here, $ n = 27 $, and $ E[\text{intersections}] = 176 $

So: $E[R] = 1 + 27 + 176 = 204$

---

\#\# Final Answer: \textbackslash boxed\{204\}

\end{solverbox}
}
\end{minipage}

\end{document}